\documentclass[10pt]{article} 
\usepackage{arxiv}
 
\usepackage{natbib}
\bibliographystyle{plainnat}

\bibpunct{(}{)}{;}{a}{,}{,}
\usepackage{caption}

\usepackage{macros}

\theoremstyle{plain}
\newtheorem{definition}{Definition}

\usepackage{tikz, pgfplots}
\usetikzlibrary{arrows.meta, calc, positioning, matrix}

\newcommand{\modelclass}{\mathcal{F}}
\newcommand{\domain}{\mathcal{X}}
\newcommand{\classes}{\mathcal{Y}}
\newcommand{\predset}{\widehat{\mathcal{Y}}}   
\newcommand{\reals}{\mathbb{R}}
\newcommand{\loss}{\ell}
\newcommand{\normal}{\mathcal{N}}     
\newcommand{\Xcf}{X^{\text{CF}}}      

\DeclareMathOperator*{\E}{\mathbb{E}} 

\title{The Risks of Recourse in Binary Classification}

\author{%
    \bf{Hidde Fokkema} \\
    Korteweg-de Vries Institute for Mathematics\\
    University of Amsterdam\\
    \texttt{h.j.fokkema@uva.nl} \\
    \And
    \bf{Damien Garreau} \\
    CAIDAS\\
    University of W\"urzburg \\
    \texttt{damien.garreau@uni-wuerzburg.de} \\
    \AND
    \bf{Tim van Erven} \\
    Korteweg-de Vries Institute for Mathematics\\
    University of Amsterdam\\
    \texttt{tim@timvanerven.nl}
}

\begin{document}
\maketitle

\begin{abstract}
  Algorithmic recourse provides explanations that help users overturn an
  unfavorable decision by a machine learning system. But so far very
  little attention has been paid to whether providing recourse is
  beneficial or not. We introduce an abstract learning-theoretic framework
  that compares the risks (i.e.,\ expected losses) for classification with
  and without algorithmic recourse. This allows us to answer the question
  of when providing recourse is beneficial or harmful at the population
  level. Surprisingly, we find that there are many plausible scenarios in
  which providing recourse turns out to be harmful, because it pushes
  users to regions of higher class uncertainty and therefore leads to more
  mistakes. We further study whether the party deploying the classifier
  has an incentive to strategize in anticipation of having to provide
  recourse, and we find that sometimes they do, to the detriment of their
  users. Providing algorithmic recourse may therefore also be harmful at
  the systemic level. We confirm our theoretical findings in experiments
  on simulated and real-world data. All in all, we conclude that the
  current concept of algorithmic recourse is not reliably beneficial, and
  therefore requires rethinking.
\end{abstract}

\section{Introduction}\label{sec:introduction}
Machine learning (ML) models are increasingly being used to make consequential
decisions in areas such as finance \citep{mukerjee2002multi}, healthcare
\citep{begoli2019need,grote2020ethics}, and hiring
\citep{nabi2018fair,schumann2020we}.  When these decisions are unfavorable to
the people they affect, algorithmic recourse provides explanations and
recommendations to favorably change their situation \citep{karimi2022survey}.
For instance, when an individual is denied a bank loan, they might like to know
the reasons and in particular what they can do to get a loan in the future.

A prominent approach to providing recourse is via counterfactual
explanations, which suggest how the individual should change their
features in order to flip the decision of the ML model
\citep{wachter2017counterfactual,ustun2019actionable,joshi2019towards}.
Originally, counterfactuals were chosen to minimize the distance between
the original and the new features \citep{wachter2017counterfactual}, but
more recently attention has also been paid to generating realistic
suggestions which are actionable and lie on the data manifold
\citep{ustun2019actionable,joshi2019towards}. 
In addition, various types
of robustness have been studied, including to random perturbations
\citep{virgolin2023cogs,dominguez2022robust,
pawelczyk2022probabilistically}, to data shifts
\citep{rawal2020algorithmic, dutta2022tree}, or to the case that the
counterfactual might not be perfectly implementable
\citep{artelt2021evaluating}. It has further been recognized that
providing recourse has consequences at the population level, because it
changes the distribution of the data. These consequences have been
studied in the context of fairness for subgroups \citep{gupta2019equalizing} and with
respect to social segregation \citep{gao2023impact}, but so far there
has been no work that studies the consequences of providing recourse for
classification accuracy.

To see why accuracy matters, consider again the loan example mentioned
above. If a person is able to repay a loan they got through recourse,
then recourse has been beneficial. But if they end up defaulting on
their payment, then recourse has actually been harmful, both for the
user and the lending institution. Providing recourse in a way that
undermines the accuracy of the ML model in determining which users are
likely to default, can therefore be dangerous. In fact, the bank loan
example above, which is standard in the recourse literature, is also
used as a motivating example in the context of strategic classification.
There, it is seen as a significant risk that loan applicants
might try to game the system by changing their features to flip the
class without actually improving their true creditworthiness
\citep{brown22stateful,performative2020hardt, smitha2019socialcost}.
\begin{figure}[t!]
    \vspace{0.5cm}
    \centering
    \def\svgwidth{\columnwidth}
    \resizebox{0.9\textwidth}{!}{
\begingroup%
  \makeatletter%
  \providecommand\color[2][]{%
    \errmessage{(Inkscape) Color is used for the text in Inkscape, but the package 'color.sty' is not loaded}%
    \renewcommand\color[2][]{}%
  }%
  \providecommand\transparent[1]{%
    \errmessage{(Inkscape) Transparency is used (non-zero) for the text in Inkscape, but the package 'transparent.sty' is not loaded}%
    \renewcommand\transparent[1]{}%
  }%
  \providecommand\rotatebox[2]{#2}%
  \newcommand*\fsize{\dimexpr\f@size pt\relax}%
  \newcommand*\lineheight[1]{\fontsize{\fsize}{#1\fsize}\selectfont}%
  \ifx\svgwidth\undefined%
    \setlength{\unitlength}{205.9201607bp}%
    \ifx\svgscale\undefined%
      \relax%
    \else%
      \setlength{\unitlength}{\unitlength * \real{\svgscale}}%
    \fi%
  \else%
    \setlength{\unitlength}{\svgwidth}%
  \fi%
  \global\let\svgwidth\undefined%
  \global\let\svgscale\undefined%
  \makeatother%
  \begin{picture}(1,0.36949907)%
    \lineheight{1}%
    \setlength\tabcolsep{0pt}%
    \put(0,0){\includegraphics[width=\unitlength,page=1]{./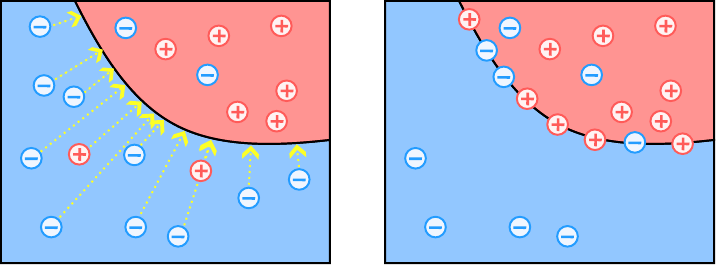}}%
    \put(0.16482317,0.38789916){\color[rgb]{0,0,0}\makebox(0,0)[lt]{\lineheight{0.75}\smash{\begin{tabular}[t]{l}\Large$R_P(f) =0.2$ \end{tabular}}}}%
    \put(0.40602499,0.06353111){\color[rgb]{0,0,0}\makebox(0,0)[lt]{\lineheight{0.75}\smash{\begin{tabular}[t]{l}\Large$x_0$ \end{tabular}}}}%
    \put(0.94660796,0.11353422){\color[rgb]{0,0,0}\makebox(0,0)[lt]{\lineheight{0.75}\smash{\begin{tabular}[t]{l}\Large$x$\end{tabular}}}}%
    \put(0.68896806,0.38789916){\color[rgb]{0,0,0}\makebox(0,0)[lt]{\lineheight{0.75}\smash{\begin{tabular}[t]{l}\Large$R_{Q_f}(f) =0.25$ \end{tabular}}}}%
  \end{picture}%
\endgroup%

    }

    \caption{
    \emph{Left panel:} Initial situation, the ML model classifies individual
    with starting features $x_0$ either negatively (in blue) or positively (in
    red). Its risk is denoted by $R_P(f)$. Points classified negatively are given the
    opportunity to move to the decision boundary (yellow dotted arrows).
    \emph{Right panel:}
    The points close enough to the boundary accept recourse and move towards the decision boundary.
    The risk with recourse, $R_{Q_f}(f)$, is then higher, because at the
    decision boundary the uncertainty about the true class is maximal,
    and the points that accepted recourse are now more likely to be
    misclassified.}
    \label{fig:intro_illustration}
    \vspace{-0.4cm}
\end{figure}
\paragraph{Main Contributions}

In this work, we study the effects of recourse on the classification
accuracy at the population level. All our results are obtained in the
context of a new learning-theoretic framework, which we introduce in
Section~\ref{sec:setting}. Accuracy is measured by the risk, which is
the expected loss of a given classifier. When recourse is provided, it
changes the distribution of the data, and hence the risk. We are
primarily interested in whether recourse makes the risk go up or down.
To answer this question it matters how the class probabilities of the
users change upon receiving recourse. We distinguish between the
\emph{compliant} case, in which these class probabilities truly improve,
and the \emph{defiant} case, in which the class probabilities do not
improve at all (for instance because the users are trying to game the
system). In Section~\ref{sec:bayes_risk} we show that, if the classifier
is optimal without recourse, then recourse will be harmful, because it
increases the risk both for the compliant and for the defiant case. The
reason is that recourse pushes users towards the decision boundary,
where the class uncertainty is higher, which therefore leads to more
mistakes. See Figure~\ref{fig:intro_illustration} for an illustration.
Section~\ref{sec:probabilist_classification} extends these results to
probabilistic classifiers that are only near-optimal, which allows for
estimation errors, and to surrogate losses like, e.g., the cross-entropy
loss. In Section~\ref{sec:strategic} we recognize that the party
deploying the classifier may strategically choose their classifier in
order to minimize the resulting risk after providing recourse. We obtain
separate results for the defiant and compliant cases, which show that
there is an incentive to preemptively undo the effect of recourse. For
the defiant case, this makes the risks with and without recourse
identical, so implementing recourse only places a burden on all parties,
without any resulting advantage. For the compliant case, the risk with
recourse does decrease, so this is the only case where we do observe an
advantage to providing recourse. Finally, in
Section~\ref{sec:experiments} we corroborate our theoretical results by
experiments, in which we observe the risk increase for a large majority
of the experiments both on synthetic data and on real data. We also
provide the code that produced the results of our experiments as a 
GitHub repository.\footnote{\href{
    https://github.com/HiddeFok/consequences-of-recourse}{
github.com/HiddeFok/consequences-of-recourse
}}

\paragraph{Not Reliably Beneficial}

In summary, our findings show that there are many common cases in which
recourse is harmful, because it leads to worse classification accuracy.
This suggests that instead of debating how to provide recourse, we
should rethink whether the current approach to recourse is desirable at
all. Notably, there is no escape by pointing to exceptions in which
recourse is beneficial, e.g., our results on strategic
classification for the compliant case, or by pointing to specific
examples where it is beneficial in practice: if recourse is not reliably
beneficial nearly all the time, then it is not suitable to be broadly
adopted.

\subsection{Further Related Work}\label{sec:related_work}
\paragraph{Causality and Algorithmic Recourse}

The difference between the defiant and the compliant case has already
been noted in the causal algorithmic recourse community. This has lead
to counterfactual methods with guarantees for actual improvement of the
class probabilities, which ensure that we are closer to the compliant
case \cite{konig2021causal,konig2023improvement}. However, our results
show that, even in the fully compliant case, recourse may still be
harmful. Such unexpected harmful effects of well-intended interventions have
also been found in the context of fairness \citep{delayed18hardt}. More
generally, it has been pointed out that users can only act on counterfactual
recommendations if these take the causal relation between the user's actions
and their features into account \citep{karimi2021algorithmic}.
Our framework is general enough to express such causal interventions,
because they only affect the risk via their effect on the distribution
of the data.

\paragraph{Strategic Classification}

Strategic classification considers the effect of deploying a classifier
in an environment with strategic players, who want to change their
features in order to influence how they are classified
\citep{hardt2016strategic,
levanon2021strategic,hardt2020causal, tsirtsis2019optimal,
yaton2021encourage}. This makes the distribution of the data dependent on $f$
as well, because the behavior of the players depends on the
classifier $f$. The more abstract setting in which there can be any dependence
between $f$ and the data distribution, has been studied under the heading
of performative prediction \citep{performative2020hardt,
mofakhami2023performative}. Our results about strategizing in
Section~\ref{sec:strategic} are a special case of strategic
classification, in which the behavior of the players is guided by the
recourse mechanism. In contrast to previous results that mostly
considered how to minimize the risk in $f$ while taking the dependence
of $f$ on the distribution into account, our aim is to quantify the
difference in the risk when we compare the settings with and without
recourse.


\section{Framework and Main Definitions}\label{sec:setting}

In this section we formalize the effect of recourse by comparing the
risk in the situation without recourse to the risk with recourse
applied.

\subsection{General Framework}

We consider binary classification, in which users with corresponding
features $x$ from a closed, convex domain $\domain \subseteq \reals^d$ will be
classified into classes $\classes = \{-1,+1\}$. We assume a model $f :
\domain \to \predset$ has already been trained. This may be a
deterministic classifier, with $\predset = \{-1,+1\}$, or a
probabilistic classifier, with $\predset = [0,1]$, for which $f(x)$
represents the probability that $x$ should be classified as $+1$. The
error of a prediction $\hat y \in \predset$ with respect to the true
label $y \in \classes$ is measured by a loss function $\loss : \predset
\times \classes \to \reals$. For instance, for deterministic predictions
$\hat y \in \{-1,+1\}$, the $0/1$ loss is $\loss(\hat y, y) = \ind\{\hat
y \neq y\}$, and, for probabilistic predictions $\hat y \in [0,1]$, the
log loss or cross-entropy loss is $\loss(\hat y,y) = \tfrac{1}{2}(1 + y)
\ln \frac{1}{\hat y} + \tfrac{1}{2}(1-y) \ln \frac{1}{1-\hat y}$.

In the absence of recourse, the data will consist of pairs $(X_0,Y)$ from
$\domain \times \classes$ with distribution $P$, and the quality of~$f$
is evaluated by its risk
\[
  R_P(f) = \E_{(X_0,Y) \sim P}[\loss(f(X_0),Y)].
  \tag{Risk without Recourse}
\]
A classifier $f_P^* \in \argmin_f R_P(f)$, which minimizes the risk, is
called Bayes-optimal. For instance, for $0/1$ loss, $f_P^*(x_0) =
\sign{P(Y=1|X_0=x_0) - \tfrac{1}{2}}$ is Bayes-optimal. Throughout the
paper, we take the sign function $\sign{z}$ to be $+1$ if $z \geq 0$ and
$-1$ for $z < 0$.

When we add recourse to the mix, a user first arrives with feature
vector $X_0$, which is drawn according to the marginal distribution of
$P$ on $\domain$. Then, depending on the original features $X_0$, the
specifics of the recourse protocol, and the model $f$, the user's
features are transformed into new features $X \in \domain$. 
Here, $X$ may
be a deterministic function of $X_0$, but in general it can also depend
on $X_0$ in a non-deterministic way if the recourse protocol is
randomized or when the user's response to recourse is not fully
predictable. Finally, a label $Y$ is generated, and we let $Q_f$ denote
the resulting distribution of $(X_0,X,Y)$. The resulting risk is then
measured under the marginal distribution of $(X,Y)$ under~$Q_f$:
\[
  R_{Q_f}(f) = \E_{(X,Y) \sim Q_f}[\loss(f(X),Y)].
  \tag{Risk with Recourse}
\]
Thus, the marginal distribution of $X_0$ under $Q_f$ is always the same
as under~$P$. Note further that $f$ influences the risk with recourse in
two ways: directly via its predictions $f(X)$ and indirectly via its
effect on the distribution $Q_f$. Except for Section~\ref{sec:strategic}
we will think of $f$ as fixed, and we will simplify notation by writing
$Q$ instead of $Q_f$. 

As motivated in the introduction, we care about the accuracy of
classifiers at the population level. This is measured by the risk, so 
we will say that recourse is beneficial if the risk under $Q$ is smaller
than the risk under $P$, and harmful otherwise.


\subsection{Specializing the Framework}
\label{sec:specializing}

The framework above is so general that it can represent any mechanism for
providing recourse.  In order to say something concrete, we have to specialize
it further.

\paragraph{Effect on the Label Distribution}

Naively, we might expect that changing the user's features from $X_0$ to
$X$ would also change their label distribution from $P(Y|X_0)$ to
$P(Y|X)$, but what actually happens depends on the underlying causal
effect of providing recourse
\citep{hardt2020causal,konig2023improvement},
and in general any effect on the label distribution is possible. We will
focus on two extreme cases which differ in whether individuals fully comply
with or fully defy this naive expectation:
\begin{itemize}[labelwidth=\widthof{Compliant}, leftmargin=2cm]
    \item[\textbf{(Compliant)}]\label{case:compl}
        $\displaystyle Q(Y\mid  X_0, X) = P(Y \mid  X)$. 
        The change in features causes a true change in label
        probability.
    \item[\textbf{(Defiant)}]\label{case:def}
        $\displaystyle Q(Y\mid  X_0, X) = P(Y \mid  X_0)$. 
        The user only changes their features, without altering their
        label probability.
\end{itemize}
We state all our results in terms of these two extreme cases. However, 
those results can easily be generalized to an intermediate setting
by taking a convex combination of the compliant and defiant measure. 
The convex combination would then carry over towards the theoretical
results. 

The defiant case has also been referred to as ``gaming''\footnote{We avoid this
terminology in the context of algorithmic recourse, because users may follow a
recourse recommendation in good faith and still not change their label
probability.} \citep{konig2023improvement,performative2020hardt}. It is
illustrated well by the following example by \citet{konig2023improvement}:
consider a classifier which classifies whether a patient is infected with Covid
based on their symptoms. Then, taking cough drops to suppress coughing may
change the classification without changing the true probability of being
infected. This behaviour could also appear when there is no recourse
considered. In that case, we assume that it is already modelled in the
distribution $P$. In our setting, the act of giving recourse is what gives the
users the opportunity to ``game'' the system, when they were not before.  It
should not come as a complete surprise that our results show that the risk
increases when giving recourse in the defiant case. The more surprising
conclusions are that even in the compliant case it is possible to observe a
risk increase and that strategizing against the risk increase in the defiant
case comes with its own negative consequences. 

\paragraph{Recourse Mechanism}

We will think of class $+1$ as being favorable to the users, while
class $-1$ is undesirable to them. For instance, $+1$ might represent a
bank loan being granted, while $-1$ means that the loan application is
rejected. Whenever a user with features $X_0$ is classified as $f(X_0) = -1$
by a deterministic classifier, they may request recourse. Many prominent
approaches
\citep{wachter2017counterfactual,ustun2019actionable,karimi2020model,pawelczyk2022exploring}
to algorithmic recourse provide the user with a counterfactual
explanation $\Xcf_0 = \varphi(X_0)$ which is the solution to an
optimization problem of the form
\begin{equation}\label{eq:cf_optimization}
    \Xcf_0 \in \argmin_{z \in \domain \colon f(z) = +1}
        c(X_0, z),
\end{equation}
where $c(x_0, z)$ models the cost for the user of moving from $x_0$ to
$z$. This can describe many different cost mechanisms, and can even be
used to express constraints like monotonicity in an Age feature or
consistency with a causal model, by assigning large cost to any $z$ that
violates the constraints. For the optimization problem in
\eqref{eq:cf_optimization} to be well-defined, we need to assume that
the set
\begin{equation}\label{eqn:closed_plusdomain}
  \{x \in \domain \mid f(x) = +1\}
  \qquad
  \text{is closed.}
\end{equation}
A consequence of this, is that a point on the decision boundary of a
classifier will be classified as class~$+1$. So, in order for $f_P^*$ to
satisfy this condition for $0/1$ loss, it matters that we defined
$\sign{0} = +1$ above. For many of our results, we will further assume
that for any points $x_0$ and $x$ the cost
\begin{equation}\label{eqn:monotonic_cost}
    \begin{split}
        z \mapsto c(x_0, z) \quad 
        \text{increases monotonically on}\\
        \text{the line segment from $x_0$ to $x$},
    \end{split}
\end{equation}
which means that larger changes require more effort from the user. Under
this assumption, $\varphi$ always maps users $x_0$ in the negative class
to the decision boundary; for users in the positive class, recourse does
not do anything and $\varphi(x_0) = x_0$. (See
Lemma~\ref{lemma:boundary} in Appendix~\ref{app:proofs_setting}.) If the
user implements the counterfactual explanation exactly, then $X=\Xcf_0$,
but they might also deviate from it in a stochastic way, which would
make $X$ a noisy approximation of $\Xcf_0$
\citep{pawelczyk2022probabilistically}. For simplicity, we will focus on
the noiseless case with $X=\Xcf_0$. We do explicitly take into account the
fact that not all users might receive recourse and that each user
has a choice in whether to implement it. Let $B \in \{0,1\}$ be an
indicator variable for whether recourse is received and implemented,
with conditional probability $\Pr(B=1\mid X_0) = r(X_0)$. It
then follows that
\[
  X
    = (1-B) X_0 + B \Xcf_0
    = (1-B) X_0 + B \varphi(X_0)
    \, .
\]
Note that, when $f(X_0) = +1$, we always have $X = X_0$ irrespective of
$B$, because $\varphi(X_0) = X_0$ as mentioned above.  
Some examples of possible $r$ functions are:
\begin{itemize}
    \item $r(x_0) = 1$. All users implement recourse;
    \item 
        $r(x_0) = \ind\{\| x_0- \varphi(x_0)\| \le D\}$ for some
        $D > 0$. Only those users within
        distance $D$ of the decision boundary implement recourse;
    \item 
        $r(x_0) = e^{-\frac{\|x_0 - \varphi(x_0)\|^2}{2\sigma^2}}$ 
        for some $\sigma^2 > 0$. All users implement recourse
        with some probability and that probability  
        is exponentially decreasing in the squared distance they have
        to cover, with a bandwidth $\sigma^2$.
\end{itemize}

\paragraph{More Complex Counterfactuals}
In our setting, we define a counterfactual explanation to be a solution to an
optimization problem \eqref{eq:cf_optimization}. In accordance with the early
counterfactual methods developed in \citep{wachter2017counterfactual,
karimi2020model, laugel2017inverse}. For the Compliant case, some of the
recent counterfactual methods \citep{laugel2019issues, kanamori2020dace,
parmentier2021optimal} can have more complex cost functions, with the goal of
generating more realistic, feasible and robust counterfactuals. These methods
produce counterfactuals that do not lie on the decision boundary, but may lie
further into the positive class. For these methods, the cost function still has
a component that depends on the Euclidean distance. The other component of the
cost function depends on something called the outlier score, which indeed does
not satisfy our requirement in Equation~\eqref{eqn:monotonic_cost}. However,
depending on how these terms are balanced, the counterfactual point will still
be close to the decision boundary, so the setting can still be approximated
by the cases that we study. 

Moreover, these methods are not designed to pay attention to the class
probability $P(Y=+1|X)$, which is crucial to circumvent our result, and may
therefore still produce counterfactuals for which $P(Y=+1|X) \approx
\frac{1}{2}$.  For instance, the methods in \citep{kanamori2020dace,
parmentier2021optimal} do pay attention to the marginal distribution of~$X$,
but not to $P(Y=+1|X)$.  Consequently, all current robustness metrics or
outlier metrics that do not increase the risk after providing recourse will do
so by luck, but not by design. 

We also like to remark that the results of the Defiant case will hold, whatever
the cost function may be. This is because the conditional probability of being
in the positive class does not change in that setting. 

\section{Risk Increase for the Bayes-Optimal Classifier}
\label{sec:bayes_risk}

In this section we present our first main result, which relates the
risk with recourse under $Q$ to the risk without recourse under $P$. The
result implies that the risk with recourse is larger, because recourse
will move data from a region where the prediction is relatively certain,
for example $P(Y=-1 | X_0) = 0.9$, to the decision boundary, where
things are the least certain, because $P(Y=+1 | X) = 1/2$. We also
illustrate this in an example with Gaussian data. The proofs and
additional details for the example in this section can be found in
Appendix~\ref{app:proofs_bayes_risk}. 

\begin{restatable}[Bayes-Optimal Classifier Risk Increase]{theorem}{ExactIncrease}
    \label{thm:exact_risk_increase}
    Let~$\ell$ be the $0/1$ loss, and assume the setting of
    Section~\ref{sec:specializing} (i.e., \eqref{eq:cf_optimization},
    \eqref{eqn:closed_plusdomain}, \eqref{eqn:monotonic_cost}).
    Suppose that $P(Y=1 |X_0 =x) =
    \tfrac{1}{2}$ for all $x$ on the decision boundary of $f^{*}_P$.
    Then
    \begin{enumerate}[label=(\alph*)]
        \item For the defiant case, 
        \begin{align}\label{eq:exact_bayes_def}
            R_{Q}(f_P^{*}) 
            & = P(B=1, f_P^{*}(X_0) = -1, Y = -1) 
                - P(B=1, f_P^{*}(X_0) = -1, Y = +1) + R_P(f_P^{*}) \nonumber\\
            &\ge R_P(f_P^{*});
        \end{align}
        \item For the compliant case,
        \begin{align}\label{eq:exact_bayes_comp}
            R_{Q}(f_P^{*}) 
            &= \tfrac{1}{2}P(B=1, f_P^{*}(X_0) = -1) 
             - P(B=1, f_P^{*}(X_0)=-1, Y=1) + R_P(f_P^{*})\nonumber\\ 
            &\ge R_P(f_P^{*}).
        \end{align}
    \end{enumerate}
    Both inequalities are strict if $P(B=1, f_P^{*}(X_0) = -1) > 0$, i.e., 
    if the probability of recourse in the negative class is non-zero.
\end{restatable}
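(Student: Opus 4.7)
My plan is to compare $R_Q(f_P^*)$ and $R_P(f_P^*)$ by conditioning on the recourse-relevant event $A := \{B=1,\, f_P^*(X_0)=-1\}$. On $A^c$, either $B=0$ or $f_P^*(X_0)=+1$, in which case $\varphi(X_0)=X_0$ by Lemma~\ref{lemma:boundary}; in both sub-cases $X=X_0$, and the compliant and defiant conditional laws of $Y$ both collapse to $P(Y\mid X_0)$, so the $A^c$-contributions to $R_Q$ and $R_P$ cancel. It therefore suffices to compute
\[
R_Q(f_P^*) - R_P(f_P^*)
= \E_Q[\loss(f_P^*(X),Y)\,\ind_A]
- \E_P[\loss(f_P^*(X_0),Y)\,\ind_A].
\]

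Next, Lemma~\ref{lemma:boundary} together with \eqref{eqn:monotonic_cost} places $X=\varphi(X_0)$ on the decision boundary of $f_P^*$ on the event $A$, and \eqref{eqn:closed_plusdomain} then forces $f_P^*(X)=+1$. So on $A$ the losses reduce to $\loss(f_P^*(X),Y)=\ind\{Y=-1\}$ and $\loss(f_P^*(X_0),Y)=\ind\{Y=+1\}$, giving
\[
R_Q(f_P^*) - R_P(f_P^*)
= \E_Q[\ind_A\,\ind\{Y=-1\}] - \E_P[\ind_A\,\ind\{Y=+1\}].
\]

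I would then split into cases. In the defiant case $Y\mid X_0$ agrees under $P$ and $Q$, so the tower property turns the right-hand side into $\E[\ind_A(P(Y=-1\mid X_0)-P(Y=+1\mid X_0))] = P(A,Y=-1)-P(A,Y=+1)$, which matches \eqref{eq:exact_bayes_def}. In the compliant case, $Y\mid X_0,X\sim P(Y\mid X)$ under $Q$ and on $A$ the point $X$ lies on the decision boundary, where by hypothesis $P(Y=+1\mid X)=\tfrac{1}{2}$; hence $\E_Q[\ind_A\,\ind\{Y=-1\}]=\tfrac{1}{2}P(A)$, while the $P$-term again evaluates to $P(A,Y=+1)$, yielding \eqref{eq:exact_bayes_comp}. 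Bayes-optimality with the convention $\operatorname{sign}(0)=+1$ makes $P(Y=+1\mid X_0)<\tfrac{1}{2}$ strictly on $A$, so the integrands in both formulas are strictly positive on $A$, and both inequalities are strict whenever $P(A)>0$.

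The only real subtlety is that the two expectation terms in the displayed difference are taken under different joint laws, and in the compliant case their values on $A$ disagree precisely because the new conditional law $P(Y\mid \varphi(X_0))$ is $\mathrm{Bernoulli}(\tfrac{1}{2})$ rather than $P(Y\mid X_0)$. Keeping these two laws separate — and invoking Lemma~\ref{lemma:boundary} in the right place so that $\varphi(X_0)$ is genuinely on the boundary where the $\tfrac{1}{2}$-hypothesis applies — is the main bookkeeping burden; the rest is straightforward conditioning.
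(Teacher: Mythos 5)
Your proposal is correct and follows essentially the same route as the paper: both arguments isolate the event $\{B=1,\ f_P^{*}(X_0)=-1\}$, use Lemma~\ref{lemma:boundary} and \eqref{eqn:closed_plusdomain} to place $\varphi(X_0)$ on the boundary where $f_P^{*}=+1$ (and, in the compliant case, where $P(Y=+1\mid X)=\tfrac12$), and then invoke Bayes-optimality with the $\operatorname{sign}(0)=+1$ convention to sign the resulting difference. The only difference is organizational — you cancel the $A^c$-contributions directly rather than first writing out $R_Q$ in full as in Lemma~\ref{lem:express_Q_risk} — and your observation that $P(Y=+1\mid X_0)<\tfrac12$ strictly on the negative class makes the strictness claim slightly more explicit than the paper's.
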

Theorem~\ref{thm:exact_risk_increase} gives an explicit expression for
the risk with recourse when $f_P^*$ is the Bayes classifier for~$P$.
Under very general conditions, it shows that providing recourse always
increases the risk, for any recourse probability function $r$ and any
monotonically increasing cost function $c$!

\subsection{Gaussian Example}\label{sec:gaussian_example_bayes}
\begin{figure}
    \begin{center}
    \includegraphics[scale=0.95]{./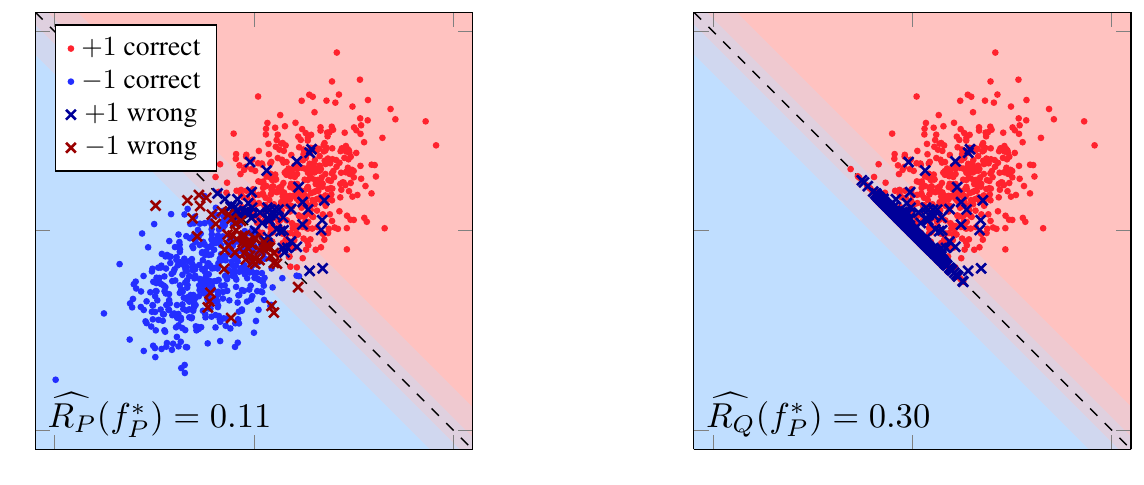}
    \end{center}
    \caption{\emph{Left:} Bayes classifier, original
    predictions; \emph{Right:} predictions after providing
    recourse in the compliant case.}
    \label{fig:gaussian_example}
\end{figure}
We proceed with a simple example that can be analyzed in closed form and
plotted visually. We assume the data is generated as follows. Let $P(X_0 \mid
Y=y)$ be $\normal(\mu, \Sigma)$ for $y=+1$ and $\normal(\nu, \Sigma)$ for
$y=-1$ for positive definite $\Sigma$, with equal prior class probabilities
$P(Y=-1) = P(Y=+1) = \tfrac{1}{2}$. For simplicity, we will assume that
$\|\mu\|_{\Sigma^{-1}} = \|\nu\|_{\Sigma^{-1}}$, 
where $\|\mu\|_{\Sigma^{-1}}^2 = \left<\mu,\Sigma^{-1}\mu \right>$
and set $\theta \coloneqq
\Sigma^{-1}(\mu -\nu) $. Then, the optimal classifier is known to be
$f_P^{*}(x_0) = \sign{x_0^{\top}\theta}$, and the Bayes risk can be expressed
in terms of the distribution function $\Phi$ of a standard normal distribution:
$R_P(f_P^{*}) = \Phi(-\tfrac{1}{2}\|\mu - \nu\|_{\Sigma^{-1}})$.  For Euclidean
cost $c(x_0,z) = \|x_0 - z\|$, providing 
recourse boils down to projecting onto the hyperplane $\{x \in \mathcal{X}
\mid x^{\top}\theta = 0\}$ and this projection can be expressed analytically by
a linear transformation $\varphi(x_0) = \left(I - \frac{\theta
\theta^{\top}}{\|\theta\|^2}\right)x_0$.

We see the effect of providing recourse on the data distribution and the risk
for the compliant case in Figure~\ref{fig:gaussian_example}. We have taken $\mu
= (+1,+1)^\top$, $\nu = (-1,-1)^\top$ and $\Sigma = \begin{psmallmatrix} 1 &
    0.5\\ 0.5 & 1 \end{psmallmatrix}$, and set $r(x_0) = 1$. In this case,
    $R_P(f_P^{*}) = \Phi(-\tfrac{1}{2}\|\mu - \nu\|_{\Sigma^{-1}}) \approx
    0.1$. The figure also shows empirically that the risk increases, which
    matches the prediction by Theorem~\ref{thm:exact_risk_increase} that
$R_Q(f_P^{*}) = \tfrac{1}{4} + \tfrac{1}{2}\Phi(-\tfrac{1}{2}\|\mu -
\nu\|_{\Sigma^{-1}}) \approx 0.31$.
The defiant case is not shown, because it would result in a similar picture,
but with $R_Q(f_P^{*}) = \frac{1}{2}$.


\section{Risk Increase for Probabilistic Classifiers}
\label{sec:probabilist_classification}

In practice, we do not have direct access to the Bayes-optimal
classifier and the classifier is learned from data.  In this section, we
therefore drop the requirement that the classifier is exactly
Bayes-optimal. We will further consider probabilistic classifiers
$\ProbModel : \domain \to [0,1]$. Thresholding $\ProbModel$ then leads
to a binary classifier $\BinModel(x) = \sign{\ProbModel(x)
-\tfrac{1}{2}}$. 
We will compare the risk with recourse to the risk without recourse,
first for the $0/1$ loss and then for a class of surrogate losses that
includes the cross-entropy loss. The assumptions we make differ, but in
both cases the conclusion is that the risk with recourse exceeds the
risk without recourse when~$g$ is sufficiently accurate. The proofs for
this section are presented in Appendix~\ref{app:proofs_prob_class}.


\subsection{Risk Increase for the $0/1$ loss}
\label{sec:risk-increase-01}

We again focus on the $0/1$ loss first. We can
handle the defiant case without further assumptions. But for the
compliant case we require that $g$ is highly accurate in the sense that
its decision boundary is close to Bayes-optimal. A simple sufficient
requirement would be that there exists $\epsilon \geq 0$ such that
\begin{align}\label{hyp:supremum}\tag{A}
  \abs{\tfrac{1}{2} - P(Y=1  \mid X_0=x)} \leq
  \epsilon\\
  \qquad \text{for all $x$ such that $\ProbModel(x) = 1/2$.} \notag
\end{align}
This gives a uniform control over deviations anywhere along the decision
boundary of~$g$. At the cost of a slightly more complicated condition,
this uniform bound can be relaxed to an average under the distribution
over the points from the negative class that get mapped to the decision
boundary of~$g$:
\begin{equation}\label{en:calibration_assumption}\tag{B}
  \int\limits_{\{x_0 : \ProbModel(x_0) <  \nicefrac{1}{2}\} }
    |\tfrac{1}{2} - P(Y=1  \mid X = \varphi(x_0))| P(\text{d}x_0)
  \leq \epsilon.
\end{equation}
Assuming $g$ is continuous, it will equal $g(x) = 1/2$ for all points
$x$ on its decision boundary. When~$\varphi$ maps all points~$x_0$ from
the negative class to the decision boundary of $g$, it follows that
\eqref{hyp:supremum} implies~\eqref{en:calibration_assumption}.

We are now ready to derive an analogous result to 
Theorem~\ref{thm:exact_risk_increase}: 
\begin{restatable}[Probabilistic Classifier Risk Increase, $0/1$ loss]{theorem}{BoundRisk}
    \label{thm:bound_risk}
    Let $\ell$ be the $0/1$ loss. Let $\ProbModel:\mathcal{X} \to [0,1]$
    be a continuous, probabilistic classifier, and define $\BinModel(x) =
    \sign{\ProbModel(x) -\tfrac{1}{2}}$. Assume
    \eqref{eq:cf_optimization}, \eqref{eqn:closed_plusdomain},
    \eqref{eqn:monotonic_cost} 
    from Section~\ref{sec:specializing}. Then, 
    \begin{enumerate}[label=(\alph*)]
        \item For the defiant case, 
            \begin{align}
            \label{eq:defiant_prob_exact}
                R_{Q}(f) 
                &= P(B=1, f(X_0) = -1, Y=-1)  
                -  P(B=1, f(X_0) = -1, Y=+1)
                + R_P(f)
            .\end{align}
        Moreover, $R_Q(\BinModel) \geq R_P(\BinModel)$ if and only if
        \begin{equation}\label{eq:defiant_prob_increase}
            P(Y=-1  \mid B=1, \BinModel(X_0) = -1)
              \geq \tfrac{1}{2}.
        \end{equation}
    \end{enumerate}
    If we additionally assume that $\ProbModel$ satisfies
    \eqref{en:calibration_assumption} with $0\le \epsilon \le \tfrac{1}{2}$, then
    \begin{enumerate}[resume*]
        \item For the compliant case,  $R_Q(\BinModel)$ 
            is lower and upper bounded by
            \begin{align}
            \label{eq:compliant_prob_exact}
                (\tfrac{1}{2} &\pm \epsilon)P(B = 1, \BinModel(X_0) =-1) 
                + P(\BinModel(X_0) = +1, Y=-1)
                + P(B=0, \BinModel(X_0) = -1, Y=1 )
            .\end{align}
            Moreover, $R_Q(\BinModel) \geq R_P(\BinModel)$ if
            \begin{equation}\label{eq:cond-risk-increase-01}
              P(Y=-1 \mid B=1, \BinModel(X_0) = -1) \geq \tfrac{1}{2} + \epsilon.
            \end{equation}
    \end{enumerate}

\end{restatable}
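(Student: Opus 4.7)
The plan is to derive a clean per-sample decomposition of $\loss(\BinModel(X),Y) - \loss(\BinModel(X_0),Y)$, and then integrate it under the appropriate conditional law of $Y$ for each case. First I would invoke Lemma~\ref{lemma:boundary} (Appendix~\ref{app:proofs_setting}): under \eqref{eq:cf_optimization}, \eqref{eqn:closed_plusdomain}, \eqref{eqn:monotonic_cost}, the map $\varphi$ fixes any $x_0$ with $\BinModel(x_0)=+1$, and by continuity of $\ProbModel$ sends every $x_0$ with $\BinModel(x_0)=-1$ to a point on $\{\ProbModel = \tfrac{1}{2}\}$, which is classified as $+1$ by the sign convention. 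Hence $\BinModel(X)=\BinModel(X_0)$ except on $\{B=1,\BinModel(X_0)=-1\}$, where $\BinModel(X)=+1$. Under $0/1$ loss this gives the key identity
\[
    \loss(\BinModel(X),Y) - \loss(\BinModel(X_0),Y) = B\,\ind\{\BinModel(X_0)=-1\}(\ind\{Y=-1\} - \ind\{Y=+1\}).
\]

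For part~(a), in the defiant case $Y\mid X_0,X\sim P(Y\mid X_0)$, so the joint law of $(X_0,B,Y)$ under $Q$ coincides with $P$ extended by the Bernoulli law of $B\mid X_0$. Taking expectations of the display above directly yields~\eqref{eq:defiant_prob_exact}. The characterization~\eqref{eq:defiant_prob_increase} then follows by rearranging: nonnegativity of $R_Q(\BinModel) - R_P(\BinModel)$ is equivalent to $P(B=1,\BinModel(X_0)=-1,Y=-1) \geq P(B=1,\BinModel(X_0)=-1,Y=+1)$, and dividing by $P(B=1,\BinModel(X_0)=-1)$ when positive gives the stated form; otherwise both sides vanish and the equivalence is vacuous.

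For part~(b), only the label law on $\{B=1,\BinModel(X_0)=-1\}$ changes, becoming $P(Y\mid X=\varphi(X_0))$. Splitting $R_Q(\BinModel)$ over $\{B=0\}$, $\{B=1,\BinModel(X_0)=+1\}$ (on which $X=X_0$), and $\{B=1,\BinModel(X_0)=-1\}$, and collapsing the first two contributions into pure $P$-probabilities, I arrive at
\[
    R_Q(\BinModel) = P(\BinModel(X_0)=+1, Y=-1) + P(B=0, \BinModel(X_0)=-1, Y=+1) + \E\bigl[r(X_0)\ind\{\BinModel(X_0)=-1\}P(Y=-1\mid X=\varphi(X_0))\bigr].
\]
The crux is to bound the last expectation. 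Writing $P(Y=-1\mid X=\varphi(X_0)) = \tfrac{1}{2} + (P(Y=-1\mid X=\varphi(X_0)) - \tfrac{1}{2})$, using $\{\BinModel=-1\}=\{\ProbModel<\tfrac{1}{2}\}$ and the symmetry $|\tfrac{1}{2}-P(Y=-1\mid x)|=|\tfrac{1}{2}-P(Y=+1\mid x)|$, assumption~\eqref{en:calibration_assumption} controls the absolute value of the residual by $\epsilon\cdot P(B=1,\BinModel(X_0)=-1)$ (with $r(X_0)\in[0,1]$ absorbed into the indicator), producing the two-sided bound~\eqref{eq:compliant_prob_exact} with $(\tfrac{1}{2}\pm\epsilon)P(B=1,\BinModel(X_0)=-1)$. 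Finally, comparing with $R_P(\BinModel) = P(\BinModel(X_0)=+1, Y=-1) + P(B=0, \BinModel(X_0)=-1, Y=+1) + P(B=1, \BinModel(X_0)=-1, Y=+1)$, the lower bound exceeds $R_P(\BinModel)$ whenever $(\tfrac{1}{2}-\epsilon)P(B=1,\BinModel(X_0)=-1) \geq P(B=1,\BinModel(X_0)=-1,Y=+1)$, which rearranges to~\eqref{eq:cond-risk-increase-01}.

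The main obstacle is the bookkeeping in part~(b): one must verify that the residual expectation is controlled by assumption~\eqref{en:calibration_assumption} in exactly the region $\{\ProbModel<\tfrac{1}{2}\}$ where $\varphi$ is nontrivial, that the symmetry of the $\tfrac{1}{2}$-deviation lets assumption~\eqref{en:calibration_assumption} (which is stated for $P(Y=1\mid\cdot)$) govern $P(Y=-1\mid\cdot)$ as well, and that the range $r\in[0,1]$ produces the correct factor $P(B=1,\BinModel(X_0)=-1)$ in the $\epsilon$-bound.
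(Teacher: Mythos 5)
Your proposal is correct and takes essentially the same route as the paper: the paper packages your per-sample decomposition into Lemma~\ref{lem:express_Q_risk} (splitting the risk over whether recourse is accepted, and using Lemma~\ref{lemma:boundary} plus the sign convention to get $\BinModel(\varphi(X_0))=+1$), and then performs the same probability bookkeeping you describe for both cases. One remark on the step you rightly identify as the crux: assumption~\eqref{en:calibration_assumption} as written bounds the unnormalized integral by $\epsilon$, so it controls the residual expectation additively by $\epsilon$ rather than by $\epsilon\, P(B=1,\BinModel(X_0)=-1)$; the multiplicative form in \eqref{eq:compliant_prob_exact} really needs the uniform condition~\eqref{hyp:supremum} (or a normalized version of \eqref{en:calibration_assumption}). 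This looseness is inherited from the paper's own proof, which makes the identical move, so it is not a gap relative to the reference argument.
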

Equations~\eqref{eq:defiant_prob_increase} and
\eqref{eq:cond-risk-increase-01} express that the class $-1$ is actually
more likely (with a margin of $\epsilon$) conditional on the set of
points in the negative class that accept recourse. This will be satisfied 
when $f$ is a reasonably accurate classifier. The intuition is that in
this case moving points to the decision boundary is harmful, because
they are more likely to be misclassified there.
We also note that, for $\epsilon = 0$, $f$ will be equal to the
Bayes-optimal classifier, and the condition is always satisfied, so we
recover the conclusion from Theorem~\ref{thm:exact_risk_increase} that
the risk will always increase. 


\subsection{Risk Increase for Surrogate Losses}
\label{sec:risk-increase-surrogate}

In this section, we investigate the scenario in which the loss is not
the $0/1$ loss, but rather a surrogate loss. We are primarily thinking
of the cross-entropy loss, as defined in Section~\ref{sec:setting}, but
our result also applies to any other loss for probabilistic predictions
$\hat y \in [0,1]$ which is such that $\loss(1/2,-1) = \loss(1/2,+1)$ is
constant.

\begin{restatable}[Probabilistic Classifier Risk Increase, Surrogate Loss]{theorem}{RiskIncreaseSurrogate}
\label{th:risk-increase-surrogate}
    Let $\ell : [0,1] \times \{-1,+1\} \to \reals$ be any loss such that
    $\loss(1/2,-1) = \loss(1/2,+1)= c$ for some constant $c$. Let
    $\ProbModel:\mathcal{X} \to [0,1]$ be a continuous, probabilistic
    classifier, and define $\BinModel(x) = \sign{\ProbModel(x)
    -\tfrac{1}{2}}$. Further assume \eqref{eq:cf_optimization},
    \eqref{eqn:closed_plusdomain}, \eqref{eqn:monotonic_cost} 
    from Section~\ref{sec:specializing}. Then,
    both for the defiant and for the compliant case, we have $R_Q(g)
    \geq R_P(g)$ if and only if
    \begin{equation}\label{eqn:good_surrogate_classifier}
      \E_P\big[\loss(g(X_0),Y) \mid f(X_0) = -1, B=1\big] \leq c.
    \end{equation}
\end{restatable}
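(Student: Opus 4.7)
The plan is to split $R_Q(g)$ according to whether a user in the negative class takes recourse, and then exploit the constant-loss assumption to handle those users identically in the defiant and compliant cases. Let $A = \{f(X_0) = -1,\, B = 1\}$ denote this key event, so that
\[
R_Q(g) = \E_{Q}[\loss(g(X), Y)\, \mathbf{1}_A] + \E_{Q}[\loss(g(X), Y)\, \mathbf{1}_{A^c}],
\]
and I would handle the two terms separately.

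On the event $A$, Lemma \ref{lemma:boundary} (applicable via \eqref{eqn:monotonic_cost} and \eqref{eqn:closed_plusdomain}) together with continuity of $g$ gives $X = \varphi(X_0)$ on the decision boundary, hence $g(X) = 1/2$. The constant-loss hypothesis then forces $\loss(g(X), Y) = c$ \emph{pointwise} on $A$, regardless of the realized $Y$; this is exactly where the defiant/compliant distinction collapses, because the conditional law of $Y$ never enters. Since the marginal of $(X_0, B)$ is the same under $Q$ and $P$, the first term equals $c\cdot P(A)$. For the second term, on $A^c$ we always have $X = X_0$: either $B = 0$, or $f(X_0) = +1$, in which case Lemma \ref{lemma:boundary} gives $\varphi(X_0) = X_0$. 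Under both recourse regimes, the conditional law $Q(Y \mid X_0, X)$ reduces to $P(Y \mid X_0)$ on $A^c$, because $P(Y \mid X)$ in the compliant case collapses to $P(Y \mid X_0)$ once $X = X_0$. So the second term equals $\E_P[\loss(g(X_0), Y) \mathbf{1}_{A^c}] = R_P(g) - \E_P[\loss(g(X_0), Y) \mathbf{1}_A]$.

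Combining the two contributions gives
\[
R_Q(g) - R_P(g) = c\cdot P(A) - \E_P[\loss(g(X_0), Y)\, \mathbf{1}_A],
\]
and when $P(A) > 0$ dividing by $P(A)$ rewrites this as $P(A)\bigl(c - \E_P[\loss(g(X_0), Y) \mid A]\bigr)$, yielding the claimed equivalence \eqref{eqn:good_surrogate_classifier}; when $P(A) = 0$ the difference vanishes and the equivalence is vacuous. The only slightly non-routine step is the geometric one — ensuring $\varphi(X_0)$ lands precisely where $g = 1/2$, so that the constant-loss identity $\loss(1/2, \cdot) = c$ is available — which is exactly where continuity of $g$ (identifying the decision boundary with $\{g = 1/2\}$) and the monotonic cost \eqref{eqn:monotonic_cost} (pinning $\varphi(X_0)$ to that boundary) combine to make the argument go through cleanly.
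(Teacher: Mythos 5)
Your proposal is correct and follows essentially the same route as the paper's proof: decompose $R_Q(g)$ by the indicator of recourse in the negative class, use Lemma~\ref{lemma:boundary} and continuity of $g$ to conclude that $g(\varphi(X_0)) = 1/2$ there so the loss equals $c$ independently of $Y$ (which is why the defiant and compliant cases coincide), and observe that the remaining term is unchanged between $P$ and $Q$. Your write-up is in fact slightly more careful than the paper's, spelling out the justification for replacing $Q$ by $P$ on the complement event and handling the degenerate case $P(A) = 0$ explicitly.
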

Condition~\ref{eqn:good_surrogate_classifier} means that, on average
over users from the negative class who receive recourse, the loss should
be lower than the value of the loss at the decision boundary. This means
that $g$ should be a reasonably accurate classifier, which performs
better on this group than simply predicting $1/2$. But it is much weaker
than requiring that $g$ should be close to Bayes-optimal, as we did in
Theorems~\ref{thm:exact_risk_increase} and~\ref{thm:bound_risk}. We can
get away with this weaker requirement, because, at the decision
boundary, $g(x) = 1/2$ and therefore the loss is $c$ regardless of the
underlying distribution of $Y$. This is also the reason that the defiant
and the compliant case coincide.

\section{Strategic Classification}
\label{sec:strategic}

So far we have assumed that the classifier $\BinModel$ was fixed, but when the
party deploying $\BinModel$ knows in advance that they will need to provide
recourse, they have an incentive to strategically choose $\BinModel$ in order to
minimize the resulting risk under~$Q$. In this section, we study the
result of strategizing for both the defiant and compliant scenario. 
Before presenting our results, we first introduce the
part of the setup that is common to both. At the end of the section, we
reflect on our findings in a short discussion.

\subsection{Common Setup}

Throughout this section we focus on binary classifiers 
$\BinModel : \domain \to \{-1,+1\}$ 
with the $0/1$ loss. And, since $\BinModel$ is now variable, we write
$Q_{\BinModel}$, $\varphi_\BinModel$ and $r_\BinModel$ 
instead of $Q$, $\varphi$ and~$r$. We
assume that anyone either accepts or rejects recourse
deterministically, i.e., that $r_\BinModel(x_0) \in \{0, 1\} $ for all $x_0$. 
And we also assume that the classifier $\BinModel$ is selected from a
restricted class of functions $\modelclass$. For notational sake, define 
$\varphi_\BinModel^{r}(x_0) =r_\BinModel(x_0)\varphi_{\BinModel}(x_0) + (1-r_\BinModel(x_0))x_0 $.
Under the effect of recourse, $\modelclass$ transforms into
\[
    \modelclass^{r}_{\varphi}
    \defeq \{
        x_0 \mapsto 
        \BinModel(\varphi_\BinModel^{r}(x_0)) 
        \mid \BinModel \in \modelclass
    \}
\, .
\]
We say that $\modelclass$ is \emph{invariant under recourse} if, for any
$\BinModel \in \modelclass$, there exists a unique $\BinModel' \in
\modelclass$ such that $\BinModel'$ with recourse is equivalent to $f$
without recourse, i.e.,  $\BinModel'(\varphi_\BinModel^{r}(x_0)) =
\BinModel(x_0)$ for all $x_0$. This implies, in particular, that 
$\modelclass^{r}_\varphi = \modelclass$.
As a concrete example, one can think of linear classification, with
recourse defined as bringing any point within distance less than $D>0$
of the decision boundary to the positive class. 
In this example, shifting the original classifier by $D$ orthogonally to
the decision boundary in the direction of the positive class gives another equivalent classifier: it is thus invariant under recourse. 
Details for this example and another one are provided in Appendix~\ref{app:protecting}. 

\subsection{Defiant Case}
In the defiant case, the setting above implies that providing recourse
does not change the risk:
\begin{restatable}[Strategizing in the Defiant Case]{theorem}{StrategizingDefiant}
\label{th:strategizing-defiant}
  Let $\ell$ be the $0/1$ loss, assume \eqref{eq:cf_optimization},
  \eqref{eqn:closed_plusdomain}, \eqref{eqn:monotonic_cost} from
  Section~\ref{sec:specializing} with $r(x_0) \in \{0, 1\} $ 
  for all $x_0 \in \mathcal{X}$, and
  suppose $\modelclass$ is invariant under recourse. 
  Then, providing
  recourse in the defiant case does not change the risk when the party deploying the
  classifier strategizes to minimize their risk over $\modelclass$:
  \[
    \min_{\BinModel \in \modelclass} R_{Q_\BinModel}(\BinModel)
      = \min_{\BinModel \in \modelclass} R_{P}(\BinModel)
      \, .
  \]
\end{restatable}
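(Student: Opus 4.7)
The plan is to show that in the defiant case, the minimum risk over $\modelclass$ with recourse matches the minimum risk over $\modelclass$ without recourse by reducing both to the same minimization over the transformed class $\modelclass^{r}_\varphi$, and then invoking the invariance hypothesis to identify $\modelclass^{r}_\varphi$ with $\modelclass$.

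First, I would unpack the joint distribution under $Q_\BinModel$ in the defiant case. Since $r_\BinModel(x_0) \in \{0,1\}$, the feature transformation $X = \varphi_\BinModel^{r}(X_0)$ is a deterministic function of $X_0$. The defiant assumption $Q_\BinModel(Y \mid X_0, X) = P(Y \mid X_0)$ then implies that the marginal distribution of $(X_0, Y)$ under $Q_\BinModel$ coincides exactly with $P$. Consequently,
\[
  R_{Q_\BinModel}(\BinModel)
    = \E_{(X_0,Y)\sim P}\big[\loss\big(\BinModel(\varphi_\BinModel^{r}(X_0)), Y\big)\big]
    = R_P(\widetilde{\BinModel}),
\]
where $\widetilde{\BinModel}(x_0) \defeq \BinModel(\varphi_\BinModel^{r}(x_0))$ is, by definition, an element of $\modelclass^{r}_\varphi$.

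Next, I would argue that the map $\BinModel \mapsto \widetilde{\BinModel}$ is a surjection from $\modelclass$ onto $\modelclass^{r}_\varphi$ (this is immediate from the definition of $\modelclass^{r}_\varphi$), so that
\[
  \min_{\BinModel \in \modelclass} R_{Q_\BinModel}(\BinModel)
    = \min_{\BinModel \in \modelclass} R_P(\widetilde{\BinModel})
    = \min_{g \in \modelclass^{r}_\varphi} R_P(g).
\]
Finally, invariance under recourse gives $\modelclass^{r}_\varphi = \modelclass$, and the right-hand side becomes $\min_{g \in \modelclass} R_P(g)$, which is the desired conclusion.

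The only subtle point, and thus the main obstacle, is to verify carefully that in the defiant case the label distribution is truly unaffected by the recourse transformation, so that the expectation in $R_{Q_\BinModel}(\BinModel)$ can be rewritten as an expectation under $P$ alone. Once that is in place, everything else is a bookkeeping exercise built on the definition of $\modelclass^{r}_\varphi$ and the invariance hypothesis; no further use of \eqref{eq:cf_optimization}, \eqref{eqn:closed_plusdomain} or \eqref{eqn:monotonic_cost} is needed beyond ensuring $\varphi_\BinModel$ is well-defined.
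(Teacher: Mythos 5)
Your proposal is correct and follows essentially the same route as the paper's proof: establish that the defiant assumption makes the joint law of $(X_0,Y)$ under $Q_\BinModel$ coincide with $P$, rewrite $R_{Q_\BinModel}(\BinModel)$ as $R_P$ of the composed classifier $x_0 \mapsto \BinModel(\varphi^r_\BinModel(x_0))$, pass the minimization to $\modelclass^{r}_\varphi$, and conclude by recourse invariance. Your explicit remark that the map $\BinModel \mapsto \widetilde{\BinModel}$ is a surjection onto $\modelclass^{r}_\varphi$ is a small clarification of a step the paper leaves implicit, but it is not a different argument.
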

Intuitively, the reason is that in the defiant case it is strategically
optimal to maintain the original decision boundary, because users do not
really change upon receiving recourse. This is possible when~$\modelclass$ 
is recourse invariant, because then there is always a
function available that compensates for the effect of recourse. Recourse
therefore has no effect on the final decisions, but instead only places
a burden on users who have to implement it and on the party deploying
the classifier, which has to provide a recourse mechanism. In this case,
recourse therefore has only negative effects, and may be considered
harmful. 
We prove Theorem~\ref{th:strategizing-defiant} in Appendix~\ref{app:protecting}. 


\subsection{Compliant Case}

In the compliant case, the situation is different and strategizing can
actually improve the risk. We require the following definition. 
\begin{definition}\label{def:compliant_move}
  Suppose $\modelclass$ is recourse
  invariant, and let $ \BinModel \in \argmin_{\BinModel \in \modelclass} R_P(\BinModel)$ be a
  minimizer of the risk without recourse. Let $ \BinModel' \in
  \modelclass$ be the (unique) 
  classifier such that $ \BinModel'(\varphi_{ \BinModel'}(x_0)) =
   \BinModel(x_0)$ for all $x_0 \in \mathcal{X}$ and define 
  $\Delta$ to be, 
  \[
    \Delta \defeq 
      \E_{(X_0, Y) \sim P}[\loss( \BinModel(X_0), Y)]
      - \E_{(X_0, Y) \sim Q_{ f'}}[\loss( \BinModel(X_0), Y)]
      \, .
  \]
\end{definition}
Here, the function $ \BinModel'$ compensates the effect of giving recourse 
for the original classifier $ \BinModel$, and it exists by recourse invariance.
The quantity $\Delta$ measures the change in risk when we fix the classifier to be $ \BinModel$,
but the data are either generated by $P$ (no recourse) or $Q_{ \BinModel'}$ 
(recourse for the classifier $ \BinModel'$). 
Intuitively, $\Delta$ measures the effect of recourse on the distribution of
users when the strategy is to choose a function $ \BinModel'$ that
compensates for the effect of recourse. We generally expect recourse to
move users further into the positive class, and therefore to make it
more certain that their class label will indeed be $Y = +1$, which means
that~$\Delta$ would be positive. 
A detailed example is provided in Appendix~\ref{app:protecting}. 

\begin{restatable}[Strategizing in the Compliant Case]{theorem}{StrategizingCompliant}
\label{th:strategizing-compliant}
  Let $\ell$ be the $0/1$ loss, assume \eqref{eq:cf_optimization},
  \eqref{eqn:closed_plusdomain}, \eqref{eqn:monotonic_cost} from
  Section~\ref{sec:specializing} 
  and suppose $\modelclass$ is invariant under recourse. Let $\Delta$ be as
  in Definition~\ref{def:compliant_move}. 
  Then, the risk after providing recourse in the compliant case 
  can be bounded in terms of the risk without recourse when the party deploying the
  classifier strategizes to minimize their risk over $\modelclass$,
  \[
    \min_{f \in \modelclass} R_{Q_f}(f)
      \leq R_{Q_{ f'}}( f')
      = \min_{f \in \modelclass} R_{P}(f) - \Delta,
  \]
  where $ f'$ is as in Definition~\ref{def:compliant_move}.
\end{restatable}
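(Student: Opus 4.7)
Throughout this sketch, write $\bar f \in \argmin_{f \in \modelclass} R_P(f)$ and let $\bar f' \in \modelclass$ be its compensator as in Definition~\ref{def:compliant_move}, so that $\bar f'(\varphi^{r}_{\bar f'}(x_0)) = \bar f(x_0)$ for every $x_0$ by the recourse-invariance of $\modelclass$. The upper bound $\min_{f \in \modelclass} R_{Q_f}(f) \leq R_{Q_{\bar f'}}(\bar f')$ is immediate from $\bar f' \in \modelclass$, so the substantive content is the equality $R_{Q_{\bar f'}}(\bar f') = R_P(\bar f) - \Delta$. The plan is to use recourse-invariance to rewrite $R_{Q_{\bar f'}}(\bar f')$ as a risk involving the \emph{original} classifier $\bar f$ evaluated on the \emph{original} features $X_0$, after which the definition of $\Delta$ immediately closes the argument.

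The key step is to establish the almost-sure identity $\bar f'(X) = \bar f(X_0)$ under $Q_{\bar f'}$. By construction, $X = \varphi^{r}_{\bar f'}(X_0) = r_{\bar f'}(X_0)\, \varphi_{\bar f'}(X_0) + (1 - r_{\bar f'}(X_0))\, X_0$ with $r_{\bar f'}(X_0) \in \{0,1\}$, and the recourse-invariance identity gives $\bar f'(\varphi^{r}_{\bar f'}(x_0)) = \bar f(x_0)$ for every $x_0 \in \domain$. On the event $\{r_{\bar f'}(X_0) = 1\}$ this directly yields $\bar f'(X) = \bar f'(\varphi_{\bar f'}(X_0)) = \bar f(X_0)$. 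On the event $\{r_{\bar f'}(X_0) = 0\}$ we have $\varphi^{r}_{\bar f'}(X_0) = X_0 = X$, and the invariance identity collapses to $\bar f'(X_0) = \bar f(X_0)$, so again $\bar f'(X) = \bar f(X_0)$.

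Given this identification, we rewrite
\[
  R_{Q_{\bar f'}}(\bar f')
    = \E_{(X,Y) \sim Q_{\bar f'}}[\ell(\bar f'(X), Y)]
    = \E_{(X_0,Y) \sim Q_{\bar f'}}[\ell(\bar f(X_0), Y)].
\]
Rearranging the definition of $\Delta$ from Definition~\ref{def:compliant_move} gives $\E_{Q_{\bar f'}}[\ell(\bar f(X_0), Y)] = \E_P[\ell(\bar f(X_0), Y)] - \Delta = R_P(\bar f) - \Delta$, and since $\bar f$ minimizes $R_P$ over $\modelclass$, $R_P(\bar f) = \min_{f \in \modelclass} R_P(f)$. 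Combined with the trivial inequality from the first paragraph, this completes the proof.

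The only genuine subtlety is the case analysis on the two branches of $r_{\bar f'}$ in the recourse-invariance identity; everything else is direct substitution. Note that the theorem asserts only an \emph{upper} bound on $\min_f R_{Q_f}(f)$, so the sign of $\Delta$ plays no role in the proof — the motivating intuition that $\Delta \geq 0$ in the compliant case (because recourse tends to push users into more confident regions of the positive class) is what makes the bound interesting, but it is not used in establishing it.
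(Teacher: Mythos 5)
Your proposal is correct and follows essentially the same route as the paper's own proof: the trivial inequality from $\bar f' \in \modelclass$, the recourse-invariance identity to replace $\bar f'(X)$ by $\bar f(X_0)$ under $Q_{\bar f'}$, and then the definition of $\Delta$. Your explicit case analysis on $r_{\bar f'}(X_0) \in \{0,1\}$ is slightly more careful than the paper's one-line substitution, but the argument is the same.
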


When $\Delta$ is positive, this shows that providing recourse will be
beneficial. In Appendix~\ref{app:protecting}, we
prove Theorem~\ref{th:strategizing-compliant} and expand the example of
Section~\ref{sec:gaussian_example_bayes} by showing that $\Delta > 0$ in that
case.


\subsection{Discussion}

We observe that both in the defiant and in the compliant case, an
appealing strategy for the party deploying the classifier is to
compensate for the effect of recourse by changing their classifier in a
way that maintains the original decision boundary. This implies that all
users get classified exactly the same way as without recourse, and the
only effect of recourse is to change the conditional distribution of
$Y$. For instance, in a bank loan setting, the same customers would get
the loan, but some customers might be required to reduce their
probability of defaulting before getting it.

%
\begin{figure*}[b]
    \centering
    \includegraphics[scale=0.95]{./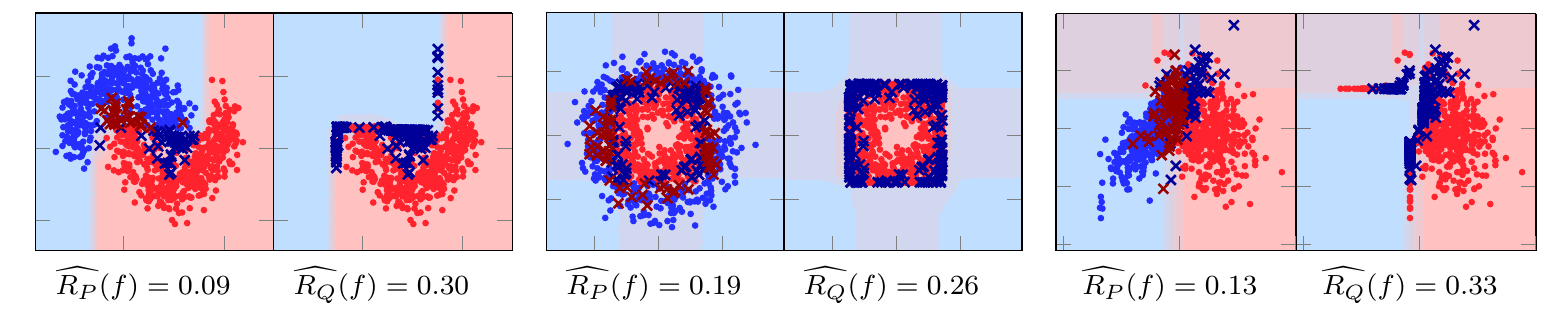}
    \caption{From left to right: Moons, Circles and Gaussian datasets.
    The left image for each shows the classifications with gradient
    boosted trees; the right image shows the effect of giving recourse.}
    \label{fig:synth_data}
\end{figure*}

\section{Experiments}\label{sec:experiments}

In addition to our theoretical results, we perform several experiments that
showcase the possible increase in risk by providing recourse. We conduct these
on synthetic data and real data. In both cases we generate $Y$ according to the
compliant setting, and except if it is stated otherwise, recourse is provided
for all $x_0$ that are classified as class~$-1$. Further details for all
the experiments are available in Appendix~\ref{app:experiments}.

\subsection{Synthetic Data}

\begin{table}[t]
    \caption{Estimated risks on synthetic data sets. Lower
    risk is bold.}
    \label{tbl:synthetic_results}
    \centering 
    \scriptsize
    \setlength\tabcolsep{4pt}
    \begin{tabular}{m{4cm}cccccc}
        \toprule
        & 
        \multicolumn{2}{c}{Moons data} & 
        \multicolumn{2}{c}{Circles data} & 
        \multicolumn{2}{c}{Gaussians data} \\
        & 
        $R_P$ & $R_Q$ & $R_P$ & $R_Q$ & $R_P$ & $R_Q$ \\
        \toprule
        Logistic Regression (LR) 
         & \textbf{0.13} $\pm$ \textbf{0.01} & 0.32 $\pm$ 0.03
         & 0.52 $\pm$ 0.02 & \textbf{0.36} $\pm$ \textbf{0.02}
         & \textbf{0.14} $\pm$ \textbf{0.01} & 0.35 $\pm$ 0.05
        \\

        GradientBoostedTrees (GBT) 
         & \textbf{0.07} $\pm$ \textbf{0.03} & 0.25 $\pm$ 0.09
         & \textbf{0.18} $\pm$ \textbf{0.02} & 0.27 $\pm$ 0.03
         & \textbf{0.14} $\pm$ \textbf{0.02} & 0.35 $\pm$ 0.06
        \\

        Decision Tree (DT) 
         & \textbf{0.09} $\pm$ \textbf{0.02} & 0.28 $\pm$ 0.08
         & \textbf{0.19} $\pm$ \textbf{0.02} & 0.26 $\pm$ 0.04
         & \textbf{0.14} $\pm$ \textbf{0.02} & 0.39 $\pm$ 0.08
        \\

        Naive Bayes (NB) 
         & \textbf{0.13} $\pm$ \textbf{0.02} & 0.33 $\pm$ 0.03
         & \textbf{0.16} $\pm$ \textbf{0.02} & \textbf{0.16} $\pm$ \textbf{0.03}
         & \textbf{0.16} $\pm$ \textbf{0.03} & 0.29 $\pm$ 0.03
        \\

        QuadraticDiscriminantAnalysis (QDA) 
         & \textbf{0.13} $\pm$ \textbf{0.01} & 0.33 $\pm$ 0.04
         & \textbf{0.16} $\pm$ \textbf{0.01} & \textbf{0.16} $\pm$ \textbf{0.03}
         & \textbf{0.13} $\pm$ \textbf{0.02} & 0.38 $\pm$ 0.05
        \\

        Neural Network(4) (NN 1)
         & \textbf{0.12} $\pm$ \textbf{0.04} & 0.27 $\pm$ 0.09
         & \textbf{0.26} $\pm$ \textbf{0.22} & \textbf{0.27} $\pm$ \textbf{0.09}
         & \textbf{0.14} $\pm$ \textbf{0.02} & 0.37 $\pm$ 0.04
        \\

        Neural Network(4, 4) (NN 2)
         & \textbf{0.07} $\pm$ \textbf{0.07} & 0.25 $\pm$ 0.06
         & \textbf{0.18} $\pm$ \textbf{0.01} & \textbf{0.23} $\pm$ \textbf{0.04}
         & \textbf{0.12} $\pm$ \textbf{0.00} & 0.40 $\pm$ 0.00
        \\

        Neural Network(8) (NN 3)
         & \textbf{0.07} $\pm$ \textbf{0.06} & 0.22 $\pm$ 0.04
         & \textbf{0.17} $\pm$ \textbf{0.02} & \textbf{0.20} $\pm$ \textbf{0.02}
         & \textbf{0.13} $\pm$ \textbf{0.02} & 0.36 $\pm$ 0.04
        \\

        Neural Network(8, 16) (NN 4)
         & \textbf{0.03} $\pm$ \textbf{0.01} & 0.26 $\pm$ 0.03
         & \textbf{0.17} $\pm$ \textbf{0.01} & \textbf{0.19} $\pm$ \textbf{0.04}
         & \textbf{0.13} $\pm$ \textbf{0.02} & 0.39 $\pm$ 0.04
        \\

        Neural Netowrk(8, 16, 8) (NN 5)
         & \textbf{0.03} $\pm$ \textbf{0.01} & 0.26 $\pm$ 0.03
         & \textbf{0.17} $\pm$ \textbf{0.01} & \textbf{0.19} $\pm$ \textbf{0.04}
         & \textbf{0.13} $\pm$ \textbf{0.02} & 0.39 $\pm$ 0.04
        \\

    \bottomrule
    \end{tabular}
\end{table}

The synthetic data consist of the $3$ datasets shown in
Figure~\ref{fig:synth_data}, all in $2$ dimensions: a Moons dataset,
which consists of two translated
semi-circles with Gaussian noise; a Circles dataset, which consists of
two nested circles with Gaussian noise; and a final dataset consisting
of $2$ Gaussians with different means and covariances. Counterfactuals
for $c(x_0,z) = \|z-x_0\|$ were computed by a brute force search to
find the closest point $z$ with $f(z) = +1$ from a dense grid over
$\domain$.

A summary of the estimated risks for a variety of classifiers can be seen in
Table~\ref{tbl:synthetic_results}. The experiments were also repeated~$10$
times to estimate confidence bounds for the risks. We also performed
experiments where not everyone accepts the counterfactual. We distinguish
between two cases. In the first, everyone has the same probability $r(x_0) = p$
of accepting the counterfactual. In the second case, the probability of
accepting is determined by the distance towards the counterfactual explanation.
We choose to model this probability as $r(x_0)=e^{-\tfrac{1}{2\sigma^2} \|x_0 -
\varphi(x_0)\|^2}$.  These results are summarised in the plots in
Figure~\ref{fig:risk_p_sigma_increase} and show that giving more points
recourse, generally increases the risk.  The plots show the risk difference
$R_Q - R_P$ on the y-axis, and either~$p$ or $\sigma$ on the x-axis. We see a
clear linear dependence on~$p$ in the first case, which is predicted by our
results. See Appendix~\ref{app:linear_relation} for a derivation of this fact. 

Looking at Table~\ref{tbl:synthetic_results}, We observe that the risk increases
in all cases for the Moons and Gaussians dataset.  With the Circles dataset,
most of the risks with recourse had a higher mean, but the confidence bounds
were overlapping. The biggest exception was logistic regression on the Circles
dataset. Here, the risk decrease happens because logistic regression has a
linear decision boundary, which is severely inappropriate for this data.
Without recourse, almost half of the class $+1$ is misclassified, because the
linear boundary cuts both circles. If the points of the outer circle, which are
of class $-1$, are projected onto this line, a large portion will land inside
the inner circle, where the conditional probability of class $+1$ will be
significantly larger than $\tfrac{1}{2}$.
\begin{figure}[t]
    \centering
    \includegraphics[scale=0.62]{./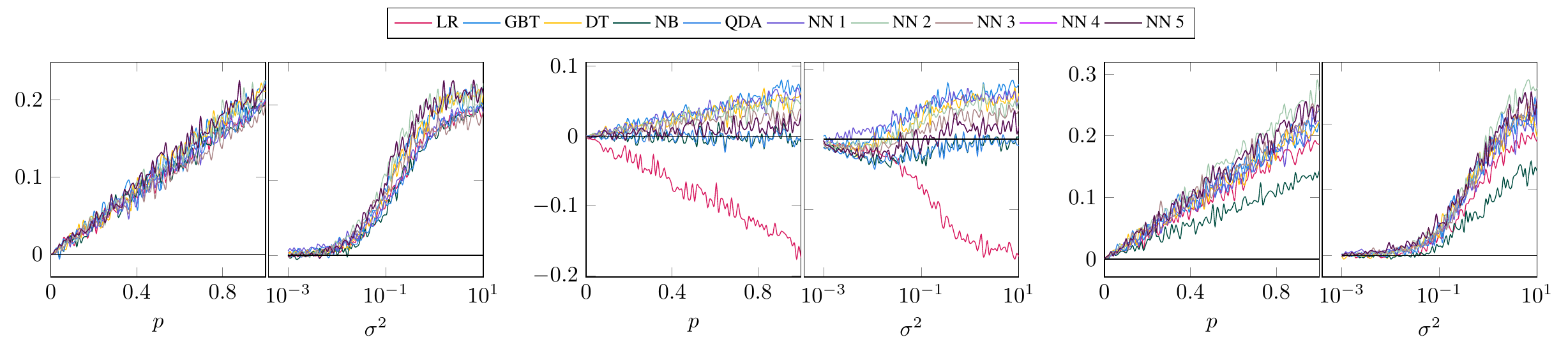}    
    \caption{From left to right: Moons, Circles and Gaussian datasets.
        The left image for each shows the risk difference when $p \in [0,1]$; 
        the right image shows the risk difference when 
        $\sigma^2 \in [10^{-3}, 10^1]$ on a logarithmic scale}
    \label{fig:risk_p_sigma_increase}
\end{figure}
\subsection{Real Data}
For the real datasets we use the \textit{Give me Credit}, \textit{Census
Income}, and \textit{Home Equity Line of Credit (HELOC)} datasets, from the \verb|CARLA| Python package
\citep{pawelczyk2021carla}. All features were normalized to $[0, 1]$. We
compare various classifiers, and $3$ counterfactual methods: Wachter's method
\citep{wachter2017counterfactual}, the Growing Spheres method
\citep{laugel2017inverse}, and Counterfactual Genetic Search (CoGS)
\citep{virgolin2023cogs}. The main challenge on real data is that we do not have access to the true
conditional distribution, $P(Y  \mid X)$. This distribution is needed to
sample $Y$ after obtaining $X = \varphi(X_0)$ through
recourse. To circumvent this issue we reserve a large portion of the
data to train a calibrated classifier for the conditional
probabilities. 

A summary of the estimated risks can be seen in
Table~\ref{tbl:real_data_adult}, which is shown here, and
Tables~\ref{tbl:real_data_credit} and \ref{tbl:real_data_heloc} in
Appendix~\ref{app:experiments}. Every experiment was repeated $10$
times to estimate confidence bounds. For the Census and HELOC datasets,
providing recourse indeed increases the risk in most cases, as predicted
by our theoretical results, but in a small number of cases the risk goes
down. The exceptions may be explained if the accuracy of the classifiers
in class $-1$ is not significantly better than random guessing, as
required by \eqref{eq:cond-risk-increase-01}. The third dataset is the
Credit data: in this case the classes are very unbalanced (class $-1$
occurs only $7\%$ of the time), and, because we did not fully finetune
the classifiers, several of them perform worse than the $7\%$ base error
rate that can be obtained by always predicting class $+1$. Since our
theoretical results only apply to high accuracy classifiers (see again
Equation~\ref{eq:cond-risk-increase-01}), this leaves room for the
possibility that providing recourse might actually decrease the risk in
this case. The estimates in Table~\ref{tbl:real_data_credit} indeed
suggest that this is happening, but, unfortunately, because of the class
imbalance, the variance in our risk estimates is too large to
definitively confirm that this indeed occurs.
\begin{table}[b]
    \caption{Estimated risks on the Census dataset.
    Lower risk in bold.}
    \label{tbl:real_data_adult}
    \centering 
    \footnotesize
    \setlength\tabcolsep{4pt}
    \begin{tabular}{m{1cm}|cccccc}
        \toprule
        &
        \multicolumn{2}{c}{Wachter} & 
        \multicolumn{2}{c}{GS} & 
        \multicolumn{2}{c}{CoGS}\\
        \hline
        & 
        $R_P$ & $R_Q$ & $R_P$ & $R_Q$ & $R_P$ & $R_Q$\\
        \toprule
        \bottomrule
LR & \textbf{0.21} $\pm$ \textbf{0.03}& 0.30 $\pm$ 0.02
& \textbf{0.22} $\pm$ \textbf{0.02}& 0.33 $\pm$ 0.03
& \textbf{0.21} $\pm$ \textbf{0.02}& 0.35 $\pm$ 0.03
\\

GBT & 0.15 $\pm$ 0.01& \textbf{0.05} $\pm$ \textbf{0.01}
& \textbf{0.15} $\pm$ \textbf{0.02}& \textbf{0.17} $\pm$ \textbf{0.12}
& \textbf{0.16} $\pm$ \textbf{0.02}& \textbf{0.35} $\pm$ \textbf{0.25}
\\

DT & \textbf{0.25} $\pm$ \textbf{0.04}& \textbf{0.23} $\pm$ \textbf{0.05}
& \textbf{0.24} $\pm$ \textbf{0.04}& \textbf{0.46} $\pm$ \textbf{0.19}
& \textbf{0.24} $\pm$ \textbf{0.06}& 0.46 $\pm$ 0.11
\\

NB & \textbf{0.18} $\pm$ \textbf{0.02}& 0.76 $\pm$ 0.02
& \textbf{0.18} $\pm$ \textbf{0.02}& 0.77 $\pm$ 0.03
& \textbf{0.18} $\pm$ \textbf{0.03}& 0.81 $\pm$ 0.03
\\

QDA & \textbf{0.21} $\pm$ \textbf{0.02}& 0.76 $\pm$ 0.03
& \textbf{0.20} $\pm$ \textbf{0.02}& 0.75 $\pm$ 0.04
& \textbf{0.20} $\pm$ \textbf{0.02}& 0.81 $\pm$ 0.03
\\

NN 1 & \textbf{0.16} $\pm$ \textbf{0.01}& \textbf{0.27} $\pm$ \textbf{0.11}
& \textbf{0.16} $\pm$ \textbf{0.02}& 0.25 $\pm$ 0.06
& \textbf{0.16} $\pm$ \textbf{0.02}& 0.28 $\pm$ 0.06
\\

NN 2 & \textbf{0.16} $\pm$ \textbf{0.02}& 0.32 $\pm$ 0.05
& \textbf{0.15} $\pm$ \textbf{0.02}& 0.31 $\pm$ 0.04
& \textbf{0.15} $\pm$ \textbf{0.02}& 0.35 $\pm$ 0.06
\\

NN 3 & \textbf{0.16} $\pm$ \textbf{0.02}& 0.36 $\pm$ 0.07
& \textbf{0.16} $\pm$ \textbf{0.02}& 0.30 $\pm$ 0.06
& \textbf{0.16} $\pm$ \textbf{0.02}& 0.33 $\pm$ 0.06
\\

NN 4 & \textbf{0.16} $\pm$ \textbf{0.01}& 0.38 $\pm$ 0.06
& \textbf{0.16} $\pm$ \textbf{0.02}& 0.34 $\pm$ 0.06
& \textbf{0.15} $\pm$ \textbf{0.02}& 0.38 $\pm$ 0.08
\\

NN 5 & \textbf{0.16} $\pm$ \textbf{0.01}& 0.38 $\pm$ 0.06
& \textbf{0.16} $\pm$ \textbf{0.02}& 0.34 $\pm$ 0.05
& \textbf{0.15} $\pm$ \textbf{0.02}& 0.38 $\pm$ 0.08
\\
    \end{tabular}
\end{table}
%

\section{Conclusion}\label{sec:conclusion}

We demonstrated, analytically and empirically, that in many cases the risk
will increase when recourse is provided. This implies that recourse can
be harmful at the population level, and therefore for a large group of
users. In such cases, alternative types of explanations might be called
for. One interesting alternative direction is the existing work on
contestability, which addresses the question of whether an algorithmic
decision is correct according to common sense, moral or legal
standards \citep{freiesleben2022intriguing}.
As a possibility for future work, our framework might be extended by
also accounting for the cost incurred by the users when implementing
recourse, e.g.,  by adding a scaled version of $c(X_0,X)$ to $R_Q(f)$.
Assuming positive costs, this would make recourse even less appealing,
and lead to the conclusion that it is harmful in an even larger number
of cases. Another extension, which would be more interesting to explore,
would be to apply our framework in cases where the users and the party
deploying the classifier have different loss functions. Then the
relation between $f$ and Bayes-optimal decisions for the user's loss
would be broken, which might lead to different conclusions.

\acknowledgments{D. Garreau acknowledges support from the French government, through the NIM-ML project (ANR-21-CE23-0005-
	01). 
This work was realized while he was working at Universit\'e C\^ote d'Azur (France). 
T. Van Erven was supported by the
    Netherlands Organization for Scientific Research (NWO) under grant number
    VI.Vidi.192.095.}

\bibliography{corbib}

\begin{thebibliography}{39}
\providecommand{\natexlab}[1]{#1}
\providecommand{\url}[1]{\texttt{#1}}
\expandafter\ifx\csname urlstyle\endcsname\relax
  \providecommand{\doi}[1]{doi: #1}\else
  \providecommand{\doi}{doi: \begingroup \urlstyle{rm}\Url}\fi

\bibitem[Artelt et~al.(2021)Artelt, Vaquet, Velioglu, Hinder, Brinkrolf, Schilling, and Hammer]{artelt2021evaluating}
André Artelt, Valerie Vaquet, Riza Velioglu, Fabian Hinder, Johannes Brinkrolf, Malte Schilling, and Barbara Hammer.
\newblock Evaluating robustness of counterfactual explanations.
\newblock In \emph{IEEE Symposium Series on Computational Intelligence}, 2021.

\bibitem[Begoli et~al.(2019)Begoli, Bhattacharya, and Kusnezov]{begoli2019need}
Edmon Begoli, Tanmoy Bhattacharya, and Dimitri Kusnezov.
\newblock The need for uncertainty quantification in machine-assisted medical decision making.
\newblock \emph{Nature Machine Intelligence}, 1\penalty0 (1):\penalty0 20--23, 2019.

\bibitem[Brown et~al.(2022)Brown, Hod, and Kalemaj]{brown22stateful}
Gavin Brown, Shlomi Hod, and Iden Kalemaj.
\newblock Performative prediction in a stateful world.
\newblock In \emph{Proceedings of The 25th International Conference on Artificial Intelligence and Statistics}, PMLR, 2022.

\bibitem[Dominguez-Olmedo et~al.(2022)Dominguez-Olmedo, Karimi, and Sch{\"o}lkopf]{dominguez2022robust}
Ricardo Dominguez-Olmedo, Amir~H. Karimi, and Bernhard Sch{\"o}lkopf.
\newblock On the adversarial robustness of causal algorithmic recourse.
\newblock In \emph{Proceedings of the 39th International Conference on Machine Learning}, PMLR, 2022.

\bibitem[Dutta et~al.(2022)Dutta, Long, Mishra, Tilli, and Magazzeni]{dutta2022tree}
Sanghamitra Dutta, Jason Long, Saumitra Mishra, Cecilia Tilli, and Daniele Magazzeni.
\newblock Robust counterfactual explanations for tree-based ensembles.
\newblock In \emph{Proceedings of the 39th International Conference on Machine Learning}, PMLR, 2022.

\bibitem[Freiesleben(2022)]{freiesleben2022intriguing}
Timo Freiesleben.
\newblock The intriguing relation between counterfactual explanations and adversarial examples.
\newblock \emph{Minds and Machines}, 32\penalty0 (1):\penalty0 77--109, 2022.

\bibitem[Gao and Himabindu(2023)]{gao2023impact}
Ruijiang Gao and Lakkaraju Himabindu.
\newblock On the impact of algorithmic recourse on social segregation.
\newblock In \emph{Proceedings of the 40th International Conference on Machine Learning}, 2023.

\bibitem[Grote and Berens(2020)]{grote2020ethics}
Thomas Grote and Philipp Berens.
\newblock On the ethics of algorithmic decision-making in healthcare.
\newblock \emph{Journal of medical ethics}, 46\penalty0 (3):\penalty0 205--211, 2020.

\bibitem[Gupta et~al.(2019)Gupta, Nokhiz, Roy, and Venkatasubramanian]{gupta2019equalizing}
Vivek Gupta, Pegah Nokhiz, Chitradeep~Dutta Roy, and Suresh Venkatasubramanian.
\newblock Equalizing recourse across groups.
\newblock \emph{arXiv preprint arXiv:1909.03166}, 2019.

\bibitem[Hardt et~al.(2016)Hardt, Megiddo, Papadimitriou, and Wootters]{hardt2016strategic}
Moritz Hardt, Nimrod Megiddo, Christos Papadimitriou, and Mary Wootters.
\newblock Strategic classification.
\newblock In \emph{Proceedings of the 2016 ACM Conference on Innovations in Theoretical Computer Science}, 2016.

\bibitem[Joshi et~al.(2019)Joshi, Koyejo, Vijitbenjaronk, Kim, and Ghosh]{joshi2019towards}
Shalmali Joshi, Oluwasanmi Koyejo, Warut Vijitbenjaronk, Been Kim, and Joydeep Ghosh.
\newblock Towards realistic individual recourse and actionable explanations in black-box decision making systems.
\newblock \emph{arXiv preprint arXiv:1907.09615}, 2019.

\bibitem[Kanamori et~al.(2020)Kanamori, Takagi, Kobayashi, and Arimura]{kanamori2020dace}
Kentaro Kanamori, Takuya Takagi, Ken Kobayashi, and Hiroki Arimura.
\newblock {DACE:} distribution-aware counterfactual explanation by mixed-integer linear optimization.
\newblock In \emph{International Joint Conference on Artificial Intelligence, {IJCAI}}, 2020.

\bibitem[Karimi et~al.(2020)Karimi, Barthe, Balle, and Valera]{karimi2020model}
Amir-Hossein Karimi, Gilles Barthe, Borja Balle, and Isabel Valera.
\newblock Model-agnostic counterfactual explanations for consequential decisions.
\newblock In \emph{International Conference on Artificial Intelligence and Statistics}. PMLR, 2020.

\bibitem[Karimi et~al.(2021)Karimi, Sch{\"o}lkopf, and Valera]{karimi2021algorithmic}
Amir-Hossein Karimi, Bernhard Sch{\"o}lkopf, and Isabel Valera.
\newblock Algorithmic recourse: from counterfactual explanations to interventions.
\newblock In \emph{Proceedings of the 2021 ACM conference on fairness, accountability, and transparency}, 2021.

\bibitem[Karimi et~al.(2022)Karimi, Barthe, Sch{\"o}lkopf, and Valera]{karimi2022survey}
Amir-Hossein Karimi, Gilles Barthe, Bernhard Sch{\"o}lkopf, and Isabel Valera.
\newblock A survey of algorithmic recourse: contrastive explanations and consequential recommendations.
\newblock \emph{ACM Computing Surveys}, 2022.

\bibitem[K{\"o}nig et~al.(2021)K{\"o}nig, Freiesleben, and Grosse-Wentrup]{konig2021causal}
Gunnar K{\"o}nig, Timo Freiesleben, and Moritz Grosse-Wentrup.
\newblock A causal perspective on meaningful and robust algorithmic recourse.
\newblock \emph{arXiv preprint arXiv:2107.07853}, 2021.

\bibitem[K\"onig et~al.(2023)K\"onig, Freiesleben, and Grosse-Wentrup]{konig2023improvement}
Gunnar K\"onig, Timo Freiesleben, and Moritz Grosse-Wentrup.
\newblock Improvement-focused causal recourse (icr).
\newblock In \emph{Proceedings of the AAAI Conference on Artificial Intelligence}, 2023.

\bibitem[Laugel et~al.(2018)Laugel, Lesot, Marsala, Renard, and Detyniecki]{laugel2017inverse}
Thibault Laugel, Marie{-}Jeanne Lesot, Christophe Marsala, Xavier Renard, and Marcin Detyniecki.
\newblock Comparison-based inverse classification for interpretability in machine learning.
\newblock In \emph{Internation Conference of Information Processing and Management of Uncertainty in Knowledge-Based Systems {IPMU}. Theory and Foundations}, Communications in Computer and Information Science. Springer, 2018.

\bibitem[Laugel et~al.(2019)Laugel, Lesot, Marsala, and Detyniecki]{laugel2019issues}
Thibault Laugel, Marie{-}Jeanne Lesot, Christophe Marsala, and Marcin Detyniecki.
\newblock Issues with post-hoc counterfactual explanations: a discussion.
\newblock \emph{arXiv preprint arxiv:1906.04774}, 2019.

\bibitem[Levanon and Rosenfeld(2021)]{levanon2021strategic}
Sagi Levanon and Nir Rosenfeld.
\newblock Strategic classification made practical.
\newblock In \emph{Proceedings of the 38th International Conference on Machine Learning}, PMLR, 2021.

\bibitem[Liu et~al.(2018)Liu, Dean, Rolf, Simchowitz, and Hardt]{delayed18hardt}
Lydia~T. Liu, Sarah Dean, Esther Rolf, Max Simchowitz, and Moritz Hardt.
\newblock Delayed impact of fair machine learning.
\newblock In \emph{Proceedings of the 35th International Conference on Machine Learning}, PMLR, 2018.

\bibitem[Miller et~al.(2020)Miller, Milli, and Hardt]{hardt2020causal}
John Miller, Smitha Milli, and Moritz Hardt.
\newblock Strategic classification is causal modeling in disguise.
\newblock In \emph{Proceedings of the 37th International Conference on Machine Learning}, PMLR, 2020.

\bibitem[Milli et~al.(2019)Milli, Miller, Dragan, and Hardt]{smitha2019socialcost}
Smitha Milli, John Miller, Anca~D. Dragan, and Moritz Hardt.
\newblock The social cost of strategic classification.
\newblock In \emph{Proceedings of the 2nd Conference on Fairness, Accountability, and Transparency}, 2019.

\bibitem[Mofakhami et~al.(2023)Mofakhami, Mitliagkas, and Gidel]{mofakhami2023performative}
Mehrnaz Mofakhami, Ioannis Mitliagkas, and Gauthier Gidel.
\newblock Performative prediction with neural networks.
\newblock In \emph{Proceedings of The 26th International Conference on Artificial Intelligence and Statistics}, PMLR, 2023.

\bibitem[Mukerjee et~al.(2002)Mukerjee, Biswas, Deb, and Mathur]{mukerjee2002multi}
Amitabha Mukerjee, Rita Biswas, Kalyanmoy Deb, and Amrit~P. Mathur.
\newblock Multi-objective evolutionary algorithms for the risk-return trade-off in bank loan management.
\newblock \emph{International Transactions in operational research}, 9\penalty0 (5):\penalty0 583--597, 2002.

\bibitem[Nabi and Shpitser(2018)]{nabi2018fair}
Razieh Nabi and Ilya Shpitser.
\newblock Fair inference on outcomes.
\newblock In \emph{Proceedings of the 32nd AAAI Conference on Artificial Intelligence}, 2018.

\bibitem[Parmentier and Vidal(2021)]{parmentier2021optimal}
Axel Parmentier and Thibaut Vidal.
\newblock Optimal counterfactual explanations in tree ensembles.
\newblock In \emph{International Conference on Machine Learning, {ICML}}, Proceedings of Machine Learning Research. {PMLR}, 2021.

\bibitem[Pawelczyk et~al.(2021)Pawelczyk, Bielawski, den Heuvel, Richter, and Kasneci]{pawelczyk2021carla}
Martin Pawelczyk, Sascha Bielawski, Johan~Van den Heuvel, Tobias Richter, and Gjergji Kasneci.
\newblock {CARLA}: A python library to benchmark algorithmic recourse and counterfactual explanation algorithms.
\newblock In \emph{Thirty-fifth Conference on Neural Information Processing Systems Datasets and Benchmarks Track (Round 1)}, 2021.

\bibitem[Pawelczyk et~al.(2022{\natexlab{a}})Pawelczyk, Agarwal, Joshi, Upadhyay, and Lakkaraju]{pawelczyk2022exploring}
Martin Pawelczyk, Chirag Agarwal, Shalmali Joshi, Sohini Upadhyay, and Himabindu Lakkaraju.
\newblock Exploring counterfactual explanations through the lens of adversarial examples: A theoretical and empirical analysis.
\newblock In \emph{Proceedings of the 25th International Conference on Artificial Intelligence and Statistics}, PMLR, 2022{\natexlab{a}}.

\bibitem[Pawelczyk et~al.(2022{\natexlab{b}})Pawelczyk, Datta, van-den Heuvel, Kasneci, and Lakkaraju]{pawelczyk2022probabilistically}
Martin Pawelczyk, Teresa Datta, Johannes van-den Heuvel, Gjergji Kasneci, and Himabindu Lakkaraju.
\newblock Probabilistically robust recourse: Navigating the trade-offs between costs and robustness in algorithmic recourse.
\newblock \emph{arXiv preprint arXiv:2203.06768}, 2022{\natexlab{b}}.

\bibitem[Perdomo et~al.(2020)Perdomo, Zrnic, Mendler-D{\"u}nner, and Hardt]{performative2020hardt}
Juan Perdomo, Tijana Zrnic, Celestine Mendler-D{\"u}nner, and Moritz Hardt.
\newblock Performative prediction.
\newblock In \emph{Proceedings of the 37th International Conference on Machine Learning}, PMLR, 2020.

\bibitem[Platt et~al.(1999)]{platt1999probabilistic}
John Platt et~al.
\newblock Probabilistic outputs for support vector machines and comparisons to regularized likelihood methods.
\newblock \emph{Advances in Large Margin Classifiers}, 10\penalty0 (3):\penalty0 61--74, 1999.

\bibitem[Rawal et~al.(2020)Rawal, Kamar, and Lakkaraju]{rawal2020algorithmic}
Kaivalya Rawal, Ece Kamar, and Himabindu Lakkaraju.
\newblock Algorithmic recourse in the wild: Understanding the impact of data and model shifts.
\newblock \emph{arXiv preprint arXiv:2012.11788}, 2020.

\bibitem[Schumann et~al.(2020)Schumann, Foster, Mattei, and Dickerson]{schumann2020we}
Candice Schumann, Jeffrey Foster, Nicholas Mattei, and John Dickerson.
\newblock We need fairness and explainability in algorithmic hiring.
\newblock In \emph{International Conference on Autonomous Agents and Multi-Agent Systems}, 2020.

\bibitem[Tsirtsis et~al.(2019)Tsirtsis, Tabibian, Khajehnejad, Singla, Sch{\"o}lkopf, and Gomez-Rodriguez]{tsirtsis2019optimal}
Stratis Tsirtsis, Behzad Tabibian, Moein Khajehnejad, Adish Singla, Bernhard Sch{\"o}lkopf, and Manuel Gomez-Rodriguez.
\newblock Optimal decision making under strategic behavior.
\newblock \emph{arXiv preprint arXiv:1905.09239}, 2019.

\bibitem[Ustun et~al.(2019)Ustun, Spangher, and Liu]{ustun2019actionable}
Berk Ustun, Alexander Spangher, and Yang Liu.
\newblock Actionable recourse in linear classification.
\newblock In \emph{Proceedings of the 2nd Conference on Fairness, Accountability, and Transparency}, 2019.

\bibitem[Virgolin and Fracaros(2023)]{virgolin2023cogs}
Marco Virgolin and Saverio Fracaros.
\newblock On the robustness of sparse counterfactual explanations to adverse perturbations.
\newblock \emph{Artificial Intelligence}, 316, 2023.

\bibitem[Wachter et~al.(2017)Wachter, Mittelstadt, and Russell]{wachter2017counterfactual}
Sandra Wachter, Brent Mittelstadt, and Chris Russell.
\newblock {Counterfactual explanations without opening the black box: Automated decisions and the GDPR}.
\newblock \emph{Harv. JL \& Tech.}, 31:\penalty0 841, 2017.

\bibitem[Yatong~Chen(2021)]{yaton2021encourage}
Jialu~Wang Yatong~Chen.
\newblock Linear classifiers that encourage constructive adaptation.
\newblock \emph{Algorithmic Recourse workshop at ICML'21}, 2021.

\end{thebibliography}


\appendix


\section{Proofs for Section~\ref{sec:setting}}
\label{app:proofs_setting}

Recall that, for any $x_0\in \domain$, we choose
\[
\varphi(x_0) \in \argmin_{z \in \domain \colon f(z) = +1} c(x_0,z)
\, .
\]

\begin{lemma}
\label{lemma:boundary}
Assume that $\{z \in \mathcal{X} \mid f(z) = +1\}$ is a closed subset of $\R^d$, 
and that the cost $c$ satisfies Assumption~\eqref{eqn:monotonic_cost}.  
Then $\varphi(x_0)$ belongs to the boundary of $\{z \in \mathcal{X}  \mid f(z) = +1\}$ 
for all $x_0 \in \mathcal{X}$ such that $f(x_0) = -1$
and $\varphi(x_0) =x_0$ if $x_0$ belongs to $\{z \in \mathcal{X} \mid 
f(z)= + 1\}$.
\end{lemma}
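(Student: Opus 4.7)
The plan is to treat the two assertions separately. The easy direction is to show $\varphi(x_0)=x_0$ whenever $x_0 \in S := \{z\in\mathcal{X}\mid f(z)=+1\}$. In that case $x_0$ is itself feasible in the optimization~\eqref{eq:cf_optimization}, so I only need to rule out the possibility that some other $z\in S$ strictly beats it. For any such $z$, assumption~\eqref{eqn:monotonic_cost} says that $t\mapsto c(x_0,(1-t)x_0+tz)$ is monotonically increasing on $[0,1]$; reading the paper's phrase ``larger changes require more effort'' as meaning strict monotonicity, this forces $c(x_0,z) > c(x_0,x_0)$ whenever $z\neq x_0$, so $x_0$ is the unique minimizer and therefore equals $\varphi(x_0)$.

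For the harder direction I would argue by contradiction. Assume $f(x_0)=-1$ and that $\varphi(x_0)$ lies in the interior of $S$. Then there exists $\epsilon>0$ with $B(\varphi(x_0),\epsilon)\subseteq S$. Since $\mathcal{X}$ is convex and contains both $x_0$ and $\varphi(x_0)$, the whole segment from $x_0$ to $\varphi(x_0)$ stays in $\mathcal{X}$, and I can pick $z_\lambda := (1-\lambda)\varphi(x_0)+\lambda x_0$ for some $\lambda\in(0,1)$ small enough that $z_\lambda\in B(\varphi(x_0),\epsilon)\subseteq S$. Parametrising the segment as $s(t) = (1-t)x_0 + t\varphi(x_0)$, the point $z_\lambda$ corresponds to $t = 1-\lambda < 1$, so applying~\eqref{eqn:monotonic_cost} along this segment gives $c(x_0, z_\lambda) < c(x_0, \varphi(x_0))$. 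This contradicts the optimality of $\varphi(x_0)$ in~\eqref{eq:cf_optimization}, so $\varphi(x_0) \notin \mathrm{int}(S)$; combined with feasibility $\varphi(x_0) \in S$ and the closedness of $S$, this places $\varphi(x_0)$ on $\partial S$.

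The main technical point to get right is the strength of the monotonicity assumption. If~\eqref{eqn:monotonic_cost} is only taken in the weak sense, the contradiction weakens to $c(x_0,z_\lambda) \leq c(x_0,\varphi(x_0))$, which does not rule out the existence of a minimizer in the interior of $S$; one would then have to argue that the argmin can still be chosen on the boundary, essentially by walking $z_\lambda$ as close to $x_0$ as possible while staying in $S$ and using closedness to pass to a limit. The wording of the paper makes the strict reading the natural one, and under that reading both halves of the lemma follow from the two short arguments above; the only bookkeeping needed is the appeal to convexity of $\mathcal{X}$ to ensure that the intermediate points $z_\lambda$ stay in the domain.
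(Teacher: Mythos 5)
Your proof is correct and follows essentially the same route as the paper's: the positive-class case by direct minimality of $z=x_0$, and the negative-class case by contradiction, producing a feasible point on the segment from $x_0$ to $\varphi(x_0)$ strictly before $\varphi(x_0)$ and invoking monotonicity of the cost (the paper picks the boundary-crossing point of the segment where you pick a nearby point inside the ball, an immaterial difference). Your remark that the argument needs the strict reading of \eqref{eqn:monotonic_cost} is apt --- the paper's own proof also silently uses the strict inequality $c(x_0,x_2) < c(x_0,x_1)$.
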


\begin{proof}
First, we remark that $c(x_0, z)$ is minimized by $z=x_0$, 
whenever $x_0 \in \{z \in \mathcal{X} \mid  f(z)= + 1\}$, which shows that 
$\varphi(x_0) = x_0$ for all $x_0$ in $\{z \in \mathcal{X} \mid f(z)= + 1\}$.

Moving towards the case of $x_0$ such that $f(x_0) = -1$. Let us set
$x_1=\varphi(x_0)$ and $A=\{z \in \mathcal{X} \mid f(z) = +1\}$.  By
contradiction, assume that $x_1$ does not belong to the boundary of $A$.  Since
by construction $x_1\in A$, then we must have $x_1\in A^{\mathrm{o}}$, the
interior of $A$. Consider the point $x_2$ that lies on the line segment $[x_0, x]$ 
and on the boundary of $A$. As $c(x_0, z)$ is increasing for $z \in [x_0, x]$ 
it must be that
\[
c(x_0, x_2) <  c(x_0, x)
.\] 
This contradicts the definition of $x_1$ and concludes the proof. 
\end{proof}

Note, that the requirement on $c$ can be further weakened by assuming
that there is a path between $x_0$ and $x$ such that $z \mapsto c(x_0, z)$ is
increasing along this path. So, we do not necessarily need straight line segments. 
The proof of this statement is analogous to the proof of Lemma~\ref{lemma:boundary}.


\section{Further Details for Section~\ref{sec:bayes_risk}}\label{app:proofs_bayes_risk}
Before we start the proof of the main result in Section~\ref{sec:bayes_risk},
we will introduce notation and some additional results. The conditional 
distributions of $Y$ given $X_0$ will be denoted by
\begin{align*}
    p_{+}(x) 
    &\coloneqq P(Y=+1 \mid X_0=x) = 1 - P(Y=-1 \mid X_0=x ) 
    \eqqcolon 1 - p_{-}(x).
\end{align*}
Now, we will prove a general result about expressing the risk under $Q$, 
of which Theorem~\ref{thm:exact_risk_increase} is a consequence. 
Every expectation $\mathbb{E}$ will be with respect to $P$ in this section.
\begin{lemma}\label{lem:express_Q_risk}
    Let $\ell$ be a loss function with 
    $\ell(y, y) = 0$, and assume the setting of
    Section~\ref{sec:specializing} (i.e., \eqref{eq:cf_optimization},
    \eqref{eqn:closed_plusdomain}, \eqref{eqn:monotonic_cost}).
    Suppose that $P(Y=1 |X_0 =x) =
    \tfrac{1}{2}$ for all $x$ on the decision boundary of $f$.
    Then 
    \begin{enumerate}[label=(\alph*)]
        \item For the defiant case, 
            \begin{align}\label{eq:gen_bayes_def}
                R_{Q}(\BinModel) 
                    &= \ell(1, -1)P(Y=-1)\mathbb{E}\left[ r(X_0) | Y=-1 \right] 
                    + \mathbb{E}\left[ (1 - r(X_0))\ell(\BinModel(X_0),
                    Y) \right];
            \end{align}
        \item For the compliant case, 
            \begin{align}\label{eq:gen_bayes_comp}
                R_{Q}(\BinModel) 
                &= \ell(1, -1)\mathbb{E}\left[ r(X_0)p_{-}(\varphi(X_0))\right] 
                + \mathbb{E}\left[ (1 - r(X_0))\ell(\BinModel(X_0), Y) \right] 
             .\end{align}
    \end{enumerate}
\end{lemma}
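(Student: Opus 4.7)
The plan is to split the risk according to whether recourse was implemented, i.e., according to the indicator $B$:
\[
  R_Q(f) = \E_Q\!\left[B\,\loss(f(X),Y)\right] + \E_Q\!\left[(1-B)\,\loss(f(X_0),Y)\right],
\]
since $X = B\,\varphi(X_0) + (1-B)X_0$. I will then argue that the first term collapses to a single loss value, and that the second term reduces to an expectation under $P$ in both cases.

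The key structural observation is that whenever $B=1$ we always have $f(X) = +1$. Indeed, by Lemma~\ref{lemma:boundary}, if $f(X_0) = +1$ then $\varphi(X_0) = X_0$, so $X = X_0$ and $f(X) = +1$; if $f(X_0) = -1$ then $\varphi(X_0)$ lies on the decision boundary of $f$, which is classified as $+1$ under our sign convention together with condition~\eqref{eqn:closed_plusdomain}. Combining this with the hypothesis $\loss(y,y)=0$, the first term simplifies to
\[
  \E_Q\!\left[B\,\loss(f(X),Y)\right] = \loss(+1,-1)\,P_Q(B=1,\,Y=-1).
\]
For the second term, conditional on $B=0$ we have $X=X_0$, so in both the defiant and the compliant case the conditional law of $Y$ given $(X_0,B=0)$ equals $P(Y\mid X_0)$. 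Since $B$ is drawn from $\mathrm{Bernoulli}(r(X_0))$ independently of $Y$ given $X_0$, this term becomes $\E_P[(1-r(X_0))\loss(f(X_0),Y)]$, the common second term appearing in both~\eqref{eq:gen_bayes_def} and~\eqref{eq:gen_bayes_comp}.

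It remains to evaluate $P_Q(B=1,Y=-1)$ in the two cases. In the defiant case, $Q(Y\mid X_0,X) = P(Y\mid X_0)$, so $Y$ is conditionally independent of $B$ given $X_0$, and hence
\[
  P_Q(B=1,Y=-1) = \E_P[r(X_0)p_-(X_0)] = P(Y=-1)\,\E_P[r(X_0)\mid Y=-1],
\]
which yields~\eqref{eq:gen_bayes_def}. In the compliant case, $Q(Y\mid X_0,X) = P(Y\mid X)$, and conditional on $B=1$ we have $X=\varphi(X_0)$, so $P_Q(Y=-1\mid X_0,B=1) = p_-(\varphi(X_0))$; note that the assumption $P(Y=1\mid X_0=x)=1/2$ on the decision boundary makes this consistent with Lemma~\ref{lemma:boundary} even though the definition of $\varphi(X_0)$ is degenerate when $f(X_0)=+1$ (in which case $\varphi(X_0)=X_0$ anyway). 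Taking expectation gives $P_Q(B=1,Y=-1) = \E_P[r(X_0)p_-(\varphi(X_0))]$, which yields~\eqref{eq:gen_bayes_comp}.

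The main bookkeeping hurdle is to handle the two regimes $f(X_0)=+1$ and $f(X_0)=-1$ uniformly: the collapse $f(X)=+1$ on $\{B=1\}$ relies on Lemma~\ref{lemma:boundary} together with the convention that the boundary is classified as $+1$, while the identification of the conditional $P_Q(Y=-1\mid X_0,B=1)$ as $p_-(\varphi(X_0))$ in the compliant case uses that $\varphi(X_0)=X_0$ on $\{f(X_0)=+1\}$ so the two pieces glue together cleanly without ever invoking $p_-$ at points off the support of $P$.
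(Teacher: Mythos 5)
Your proposal is correct and follows essentially the same route as the paper: the paper also splits the risk into the recourse part (weighted by $r(x_0)$) and the no-recourse part (weighted by $1-r(x_0)$), uses $f(\varphi(x_0))=+1$ together with $\loss(+1,+1)=0$ to reduce the first part to $\loss(1,-1)$ times the probability of $\{B=1, Y=-1\}$ under $Q$, and then evaluates that probability separately in the defiant and compliant cases exactly as you do. Your explicit conditioning on $B$ is just a notational variant of the paper's integral decomposition.
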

\begin{proof}
Before distinguishing between the $2$ cases, we expand the expression for the risk under $Q$ as
\begin{align*}
    R_Q&(\BinModel) 
    = \int\limits_{\mathcal{X}^2 \times \mathcal{Y}}\ell(\BinModel(x), y) 
        Q(\text{d}y  \mid x, x_0) Q(\text{d}x | x_0) P(\text{d}x_0) \\
    &= \int\limits_{\mathcal{X} \times \mathcal{Y}}r(x_0)\ell(\BinModel(\varphi(x_0)), y) 
        Q(\text{d}y  \mid \varphi(x_0), x_0) P(\text{d}x_0)
    + \int\limits_{\mathcal{X} \times \mathcal{Y}}
    (1 - r(x_0))\ell(\BinModel(x_0), y) 
        P(\text{d}y  \mid x_0) P(\text{d}x_0) \\
    &= \ell(1, -1)\int\limits_{\mathcal{X}}r(x_0) 
    Q(Y=-1 \mid \varphi(x_0), x_0) P(\text{d}x_0) \tag{since $f(\varphi(x_0)) = 1$}\\
    &\phantom{\quad} + \mathbb{E}\left[ (1 - r(X_0))\ell(\BinModel(X_0), Y) \right]
.\end{align*}
We focus now on the integral in the first term.

\textbf{Defiant Case:} The first term in the expression in the above display
becomes
\begin{align*}
    \int\limits_{\mathcal{X}}r(x_0)
        Q(Y=-1  \mid \varphi(x_0), x_0) P(\text{d}x_0)
    &= \int\limits_{\mathcal{X}}r(x_0) 
        P(Y=-1  \mid  x_0) P(\text{d}x_0)\\
    &= P(Y=-1)\int\limits_{\mathcal{X}}r(x_0) 
        P(\text{d}x_0  \mid Y=-1)\\
    &= P(Y=-1)\mathbb{E}\left[ r(X_0)  \mid Y=-1 \right] 
.\end{align*}

\textbf{Compliant Case:} The first term now becomes
\begin{align*}
    \int\limits_{\mathcal{X}}r(x_0)
        Q(\text{d}y  \mid \varphi(x_0), x_0) P(\text{d}x_0)
    &= \int\limits_{\mathcal{X} }r(x_0)
        P(\text{d}y  \mid  \varphi(x_0)) P(\text{d}x_0) \\
    &= \int\limits_{\mathcal{X}}r(x_0)
        P(Y=-1  \mid  \varphi(x_0)) P(\text{d}x_0) \\
    &= \int\limits_{\mathcal{X}}r(x_0) 
        p_{-}(\varphi(x_0))P(\text{d}x_0) \\
    &= \mathbb{E}\left[ r(X_0) p_{-}(\varphi(X_0)) \right] 
.\end{align*}
\end{proof}
We are now ready to prove Theorem~\ref{thm:exact_risk_increase}.
\ExactIncrease*
\begin{proof}
For both cases, we will first prove the equality and then show that
the expectation is always non-negative for the inequality. From those
proofs it can be seen how the strict inequality is derived.
We apply Lemma~\ref{lem:express_Q_risk} to both cases. 
Remark that $\ell(1, -1) = \ell(-1, 1) = 1$ and rewrite the 
common term as 
\begin{align}
    \mathbb{E}\left[ (1 - r(X_0))\ell(f_P^{*}(X_0), Y) \right]
    &\phantom{\quad}=  R_P(f_P^{*}) 
    - \E\left[ r(X_0)\ind\left(f_P^{*}(X_0) \neq Y \right) \right] \notag\\
    &\phantom{\quad}= R_P(f_P^{*}) 
    - P(B=1, f_P^{*}(X_0) \neq Y)\label{eq:compl_r}.
\end{align}

\textbf{Defiant Case:} In this case, we rewrite the first term in 
\eqref{eq:gen_bayes_def} to get
\begin{equation}\label{eq:cond_r}
    P(Y=-1)\mathbb{E}\left[ r(X_0) | Y = -1\right] 
    = \E\left[ r(X_0) \ind(Y=-1) \right].
\end{equation}
Combining expressions \eqref{eq:compl_r} and \eqref{eq:cond_r} gives 
the result, 
\begin{align}
    R_Q(f_P^{*}) 
    &= \E\left[ r(X_0)( \ind(Y=-1) - \ind(f_P^{*}(X_0) \neq Y ))\right]  
    + R_P(f_P^{*}) \label{eq:def_bayes_exp} \notag \\
    &= P(B=1, Y=-1) -  P(B=1, f_P^{*}(X_0) \neq Y) 
    + R_P(f_P^{*})\\
    & = P(B=1, f_P^{*}(X_0) = -1, Y = -1)
    - P(B=1, f_P^{*}(X_0) = -1, Y = +1) \notag\\
    & \quad + R_P(f_P^{*})\notag
.\end{align}
It remains to show that the difference of the first two probabilities
is positive. We return to the formulation in terms of expectations
and indicator functions. We can rewrite the indicator 
functions giving rise to those probabilities as 
\begin{align*}
    \ind(&f_P^{*}(x_0) = -1, y=-1) -  \ind(f_P^{*}(x_0) =-1, y=1)
    = \ind(f_P^{*}(x_0) = -1)(\ind(y=-1) -  \ind(y=1))
.\end{align*}
The expectation in \eqref{eq:def_bayes_exp} now becomes
\begin{align}\label{eq:def_integral}
    \int\limits_{\mathcal{X}\times \mathcal{Y}} r(x_0) 
    &( \ind (y=-1) 
    - \ind (f_P^{*}(x_0) \neq y) ) 
        P(\text{d}x_0,\text{d}y) \notag\\
    &=\int\limits_{\{f_P^{*} = -1\} \times \mathcal{Y}} 
        r(x_0) ( \ind (y=-1) 
        - \ind(y=1) ) P(\text{d}y  \mid X=x) 
        P(\text{d}x_0)\notag\\
    &= \int\limits_{\{f_P^{*} =-1\} }
        r(x_0)(p_{-}(x_0) - p_{+}(x_0))
        P(\text{d}x_0)
.\end{align}
Now, by $f_P^{*}$ being the Bayes optimal classifier we know that
$p_{-}(x_0)\ge p_{+}(x_0)$ for all $x_0$ on 
$\{f_P^{*} = -1\} $. So, we see that the integral in 
\eqref{eq:def_integral} is non-negative.

\textbf{Compliant Case:} We note that 
$p_{+}(\varphi(x_0)) = p_{-}(\varphi(x_0)) = \tfrac{1}{2}$ for any 
$x_0$ with $f_{P}^{*}(x_0) = -1$, because those points 
are projected onto the decision boundary by assumption~\eqref{eqn:monotonic_cost}
and Lemma~\ref{lemma:boundary}. The points on the decision boundary 
of the Bayes classifier are exactly where the probability of being
either class is $\tfrac{1}{2}$, by assumption. The first expectation in 
\eqref{eq:gen_bayes_comp} can now be written as
\begin{align}
        \mathbb{E}[r(X_0)p_{-}(\varphi(X_0))] 
        &= \tfrac{1}{2}\mathbb{E}[r(X_0)\ind(f_P^{*}(X_0) = -1)]
        + \mathbb{E}[r(X_0)p_{-}(X_0)\ind(f_P^{*}(X_0) = +1)] \notag\\
        &= \tfrac{1}{2}P(B=1, f_P^{*}(X_0) = -1) 
        + P(B=1, f_P^{*}(X_0) = + 1, Y=-1).\label{eq:db_r}
\end{align}
We note that the second probability in \eqref{eq:db_r} cancels the second 
probability \eqref{eq:compl_r} partly. First we write the latter
probability as
\begin{align*}
    P(B&=1, f_P^{*}(X_0) \neq Y) 
    =P(B=1, f_P^{*}(X_0) =1, Y=-1) 
    + P(B=1, f_P^{*}(X_0) =-1, Y=1)
.\end{align*}
Subtracting both probabilities gives
\begin{align*}
    P(B=1, &f_P^{*}(X_0) = 1, Y=-1)
    - 
    P(B=1, f_P^{*}(X_0) \neq Y)
    = -P(B=1, f_P^{*}(X_0) =-1, Y=1)
.\end{align*}
Substituting \eqref{eq:db_r} and \eqref{eq:compl_r} into the expression for 
$R_Q(f_P^{*})$ gives
\begin{align*}
    R_Q(f_{P}^{*}) 
    &= \mathbb{E}\left[ r(X_0)p_{-}(\varphi(X_0))\right] 
    + \mathbb{E}\left[ (1 - r(X_0))\ell(\BinModel(X_0), Y) \right] \\
    &= \mathbb{E}\left[ r(X_0)p_{-}(\varphi(X_0))\right] 
        + R_P(f_P^{*}) 
        - P(B=1, f_P^{*}(X_0) \neq Y) \\
    &= \tfrac{1}{2}P(B=1, f_P^{*}(X_0) = -1) 
        - P(B=1, f_P^{*}(X_0)=-1, Y=1)
        + R_P(f_P^{*})
.\end{align*}
To derive the necessary inequality, we focus again on the first two probabilities 
and write this explicitly as an integral. This integral is given by
\begin{align*}
    \tfrac{1}{2}P(B=1, 
    &f_P^{*}(X_0) = -1) 
    - 
    P(B=1, f_P^{*}(X_0)=-1, Y=1)\\
    &= \int\limits_{\{f_P^{*} =-1\}}
        r(x_0)(\tfrac{1}{2} - p_{+}(x_0))
        P(\text{d}x_0) \ge 0
\end{align*}
Where we have used that the on the set 
$\{x_0 \in \mathcal{X}  \mid f_P^{*}(x_0)=-1\}$ it must be that
$p_{+}(x_0)  \le \tfrac{1}{2}$, because  $f_P^{*} $ is the
Bayes classifier.

The strict inequality follows by remarking that the difference of the integrand in 
both integrals of the defiant and compliant case will be strictly positive on 
some positive probability set, if $P(B=1, f_P^{*}=-1)> 0$. 
\end{proof}

\subsection{Additional Details Gaussian Example in 
Section~\ref{sec:gaussian_example_bayes}}
\label{sec:additional_gaus_details}
In Section~\ref{sec:gaussian_example_bayes} it is claimed that the Bayes risk 
can be written as 
$R_P(f_P^{*}) = \Phi(-\tfrac{1}{2} \|\mu -\nu\|_{\Sigma^{-1}})$. Here, we
show this and additionally derive the Bayes optimal classifier for general
$\mu, \nu \in \mathbb{R}^{d}$.

The conditional distribution can be calculated explicitly. 
Let 
\begin{align*}
p_\mu(x) = e^{-\frac{1}{2}(x - \mu)^{\top}\Sigma^{-1}(x-\mu)}
\quad\text{ and }\quad
p_\nu(x) = e^{-\frac{1}{2}(x - \nu)^{\top}\Sigma^{-1}(x-\nu)},
\end{align*}
then
\begin{align*}
    P(Y=1  \mid X_0 = x) 
    &= \frac{
        P(Y=1)p_\mu(x)
    }{
        P(Y=1) p_\mu(x)
        +
        P(Y=-1) p_\nu(x)
    }\\
    &= \frac{
         1
    }{
        1 + e^{-x^{\top}\Sigma^{-1}(\mu - \nu) 
            + \frac{1}{2}(\|\mu\|_{\Sigma^{-1}}^2 - \|\nu\|_{\Sigma^{-1}}^2)}
    }
.\end{align*}
From this we see that for $\theta = \Sigma^{-1}(\mu - \nu)$ and
$\theta_0 = -\frac{1}{2}(\|\mu\|_{\Sigma^{-1}}^2 - \|\nu\|_{\Sigma^{-1}}^2)$ 
the Bayes classifier is given by 
$f_P^{*}(x) = \sign{x^{\top}\theta + \theta_0} $. We can now
calculate the Bayes risk by first rewriting this risk as
\begin{align}
    R_P(f_P^{*}) 
    &= \tfrac{1}{2}P(f_P^{*}(X_0) =-1 \mid Y =1) 
    + \tfrac{1}{2}P(f_P^{*}(X_0) =1 \mid Y =-1) \notag\\
    &= \tfrac{1}{2}P(X_0^{\top}\theta + \theta_0 < 0 \mid Y =1)
    + \tfrac{1}{2}P(X_0^{\top}\theta + \theta_0 \ge 0\mid Y =-1)
        \label{eqn:bayes_risk_gaussians}
.\end{align}
As $X_0$ is Gaussian, conditional on $Y$, we know that 
$X_0^{\top}\theta + \theta_0$ is also Gaussian. For $Y=1$, 
we get 
$\normal(\mu^{\top}\theta + \theta_0, \|\theta\|_{\Sigma^{-1}}^2)$ and 
for $Y=-1$ we get 
$\normal(\nu^{\top}\theta + \theta_0, \|\theta\|_{\Sigma^{-1}}^2)$.
Translating and rescaling allows us to rewrite the probabilities in 
\eqref{eqn:bayes_risk_gaussians} in terms of the CDF $\Phi$ of 
the standard normal distribution, 
\begin{align*}
    P(X_0^{\top}\theta 
    + \theta_0 < 0 \mid Y =1) 
    &= \Phi\left( 
    \frac{
        -\mu^{\top}\theta - \theta_0
    }{
        \|\theta\|_{\Sigma^{-1}}
    } \right) \\
    &=\Phi\left(
    \frac{
        -\|\mu\|_{\Sigma^{-1}}^2 + \mu^{\top}\Sigma^{-1}\nu 
        + \frac{1}{2}(\|\mu\|_{\Sigma^{-1}}^2 - \|\nu\|_{\Sigma^{-1}}^2)
    }{
        \|\mu -\nu\|_{\Sigma^{-1}}
    }\right) \\
    &=\Phi\left(
    \frac{
        -\frac{1}{2}\|\mu - \nu\|_{\Sigma^{-1}}^2
    }{
        \|\mu -\nu\|_{\Sigma^{-1}}
    }\right) 
    = \Phi( -\tfrac{1}{2}\|\mu - \nu\|_{\Sigma^{-1}} ) 
.\end{align*}
Analogously, we would get
\begin{align*}
    P(X_0^{\top}\theta + \theta_0 \ge  0 \mid Y =-1) 
    = \Phi( -\tfrac{1}{2}\|\mu - \nu\|_{\Sigma^{-1}} )
.\end{align*}
Combining the two probabilities gives the desired result.


\section{Proofs of Section~\ref{sec:probabilist_classification}}
\label{app:proofs_prob_class}
In this section we present all the previous and additional results of 
Section~\ref{sec:probabilist_classification}.
\subsection{Proof of Theorem~\ref{thm:bound_risk}}
\BoundRisk*
\begin{proof}
    \textbf{Defiant Case:} We again use Lemma~\ref{lem:express_Q_risk} which gives us
    \begin{align*}
        R_Q(\BinModel) 
        &= P(Y=-1)\E[r(X_0)  \mid Y=-1] 
            + \E[(1-r(X_0)\ind(f(X_0)\neq Y)]\\
        &= P(Y=-1, B=1) + P(f(X_0) \neq Y) 
            - P(f(X_0) \neq Y, B=1) \\
        &= P(Y=-1, B=1) - P(f(X_0) \neq Y, B=1) 
            + R_P(f) \\
        &= P(B=1, f(X_0) = -1, Y=-1) 
            - P(B=1, f(X_0) = -1, Y=+1) 
            + R_P(f)
    .\end{align*}
    To derive the second claim, we upper bound $R_P(\BinModel)$ by
    $R_Q(f)$. We see that the $R_P(f)$ term drops on both sides and we are left with
    \begin{align*}
        P(B=1, &f(X_0) = -1, Y=-1) 
        \le 
        P(B=1, f(X_0) = -1, Y=+1) 
    \end{align*}
    Conditioning on $\{B=1, f(X_0) = -1\} $ and cancelling the common 
    terms gives us
    \begin{align*}
        P(Y=+1  \mid f(X_0) = -1, B=1) 
        &\le P(Y=-1 \mid f(X_0) = -1, B=1), 
        & &\iff\\
        P(Y=-1  \mid f(X_0) = -1, B=1) 
        & \ge \tfrac{1}{2}
    .\end{align*}
    \textbf{Compliant Case:} We apply Lemma~\ref{lem:express_Q_risk}. Note, that 
    Assumption~\ref{en:calibration_assumption} and Lemma~\ref{lemma:boundary} 
    tell us that on the set $\{x_0 \in \mathcal{X}  \mid \BinModel(x_0)= -1\} $ 
    we have that $\tfrac{1}{2} - \epsilon < p_{-}(\varphi(X_0)) \le \tfrac{1}{2} + \epsilon$ in 
    expectation. 
    For the first expectation we get the upper bound
    \begin{align*}
        \mathbb{E}\left[ r(X_0) p_{-}(\varphi(X_0)) \right] 
        &= \mathbb{E}\left[r(X_0)\ind \{\BinModel(X_0) = -1\} p_{-}(\varphi(X_0))\right] 
        + \mathbb{E}\left[r(X_0) \ind \{\BinModel(X_0) = 1\} p_{-}(X_0)\right]\\
        &\le  (\tfrac{1}{2} + \epsilon) P(\BinModel(X_0) = -1, B=1)
        + P(\BinModel(X_0) =1, Y=-1, B=1)
    .\end{align*}
    Analogously, for the lower bound we get
    \begin{align*}
        \mathbb{E}\left[ r(X_0) p_{-}(\varphi(X_0)) \right]  
        &\ge  (\tfrac{1}{2} - \epsilon) P(\BinModel(X_0) = -1, B=1)
        + P(\BinModel(X_0) =1, Y=-1, B=1)
    .\end{align*}
    We write the second expectation as follows in this case, 
    \begin{align*}
        \mathbb{E}\left[ (1 - r(X_0)) \ell(\BinModel(X_0), Y) \right] 
        = P(f(X_0) \neq Y, B=0)
    .\end{align*}
    This leaves us with
    \begin{align}
        R_Q(\BinModel) 
        &\le (\tfrac{1}{2} + \epsilon)P(\BinModel(X_0) = -1, B=1) 
        + P(\BinModel(X_0) = 1, Y=-1, B=1) 
        + P(B=0, f(X_0) \neq Y)\notag\\
        &\le (\tfrac{1}{2} + \epsilon)P(\BinModel(X_0) = -1, B=1) 
        + P(\BinModel(X_0) = 1, Y=-1) 
        + P(f(X_0) =-1, Y=1, B=0)\label{eq:Q_risk_upper_bound}
    .\end{align}
    Similarly, for the lower bound we get
    \begin{align}
        R_Q(\BinModel) 
        &\geq(\tfrac{1}{2} - \epsilon) P(\BinModel(X_0) = -1, B=1) 
        + P(\BinModel(X_0) = 1, Y=-1) 
        + P(f(X_0) =-1, Y=1, B=0)\label{eq:Q_risk_lower_bound}
    \end{align}
   Combining expressions \eqref{eq:Q_risk_upper_bound} 
   and \eqref{eq:Q_risk_lower_bound} gives
   the desired lower and upper bound.

   We move to the second claim. This time, we upper bound $R_P(\BinModel)$ 
   by the derived lower bound. This gives us
   \begin{align*}
    P(Y=1, f(X_0) = -1)
       &\le (\tfrac{1}{2} - \epsilon) P(\BinModel(X_0) = -1, B=1)
       +  P(\BinModel(X_0)=-1,  Y=1, B=0) 
       \\
    P(Y=1, f(X_0) = -1, B=1)
       &\le (\tfrac{1}{2} - \epsilon) P(\BinModel(X_0) = -1, B=1) 
       \\
    P(Y=1  \mid f(X_0) = -1, B=1)
       &\le(\tfrac{1}{2} - \epsilon)P(Y=-1  \mid f(X_0) = -1, B=1) 
    \end{align*}
    The final inequality can be rewritten as
    \begin{align*}
        P(Y=-1  \mid f(X_0)=-1, B=1)
        \ge 
        (\tfrac{1}{2} + \epsilon)
    .\end{align*}
\end{proof}
%

\subsection{Proof of Theorem~\ref{th:risk-increase-surrogate}}

\RiskIncreaseSurrogate*

\begin{proof}
	Let $I = \ind\{f(X_0) = -1, B=1\}$ be the indicator for recourse in
	the negative class. Then, since $\varphi(X_0)$ lies on the decision
	boundary (DB) when $X_0$ is in the negative class,
	\begin{align*}
		R_Q(g)
		&= \E_{(X,Y) \sim Q}[\loss(g(X),Y)]\\
		&= \E_{(X_0,Y) \sim Q}[\loss(g(\varphi(X_0)),Y) I] 
            + \E_{(X,Y) \sim Q}[\loss(g(X),Y) (1-I)]\\
		&= \E_{(X_0, Y) \sim Q}[\loss(1/2,Y) I] 
        + \E_{(X_0, Y) \sim P}[\loss(g(X_0),Y) (1-I)]  
        \tag{$\varphi(X_0)$ on the DB}\\
		&= c P(f(X_0) = -1, B=1) +\E_{(X_0, Y) \sim P}[\loss(g(X_0),Y) (1-I)]  
        \tag{by definition of $c$}\\
        &\geq \E_P[\loss(g(X_0),Y) I] 
        + \E_{(X_0, Y) \sim P}[\loss(g(X_0),Y) (1-I)]\\
		&= R_P(g),
	\end{align*}
	where the inequality is equivalent to \eqref{eqn:good_surrogate_classifier}.
\end{proof}

\section{Additional results and proofs for Section~\ref{sec:strategic}}
\label{app:protecting}

\begin{figure*}[b]
    \begin{center}
        \includegraphics{./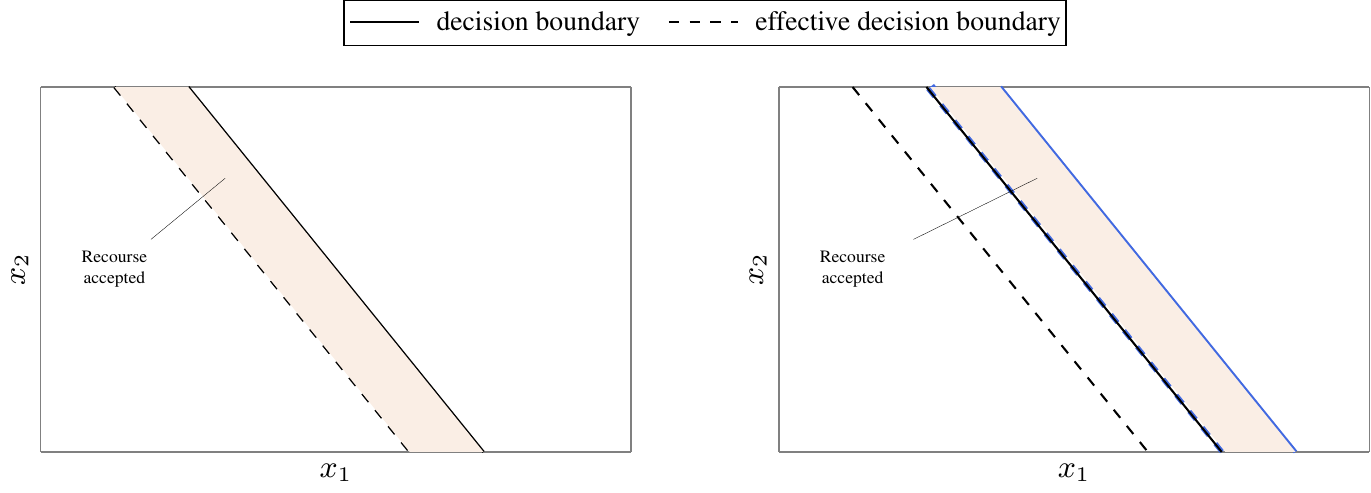}
        \caption{Left figure: Linear classifier with a shaded area to indicate
        where recourse is accepted. Right figure: The same linear classifier but shifted
        towards the right in such a way that the effective decision boundary is the original
        decision boundary.}
        \label{fig:strategic_classification}
    \end{center}
\end{figure*}

\subsection{Examples of classifiers invariant under recourse}

Let us justify more rigorously the linear classifier example introduced in
Section~\ref{sec:strategic}.  A visual representation describing this example
can be found in Figure~\ref{fig:strategic_classification}.

\begin{example}\label{ex:invariance_linear_classifiers}
Consider the set of linear classifiers $\modelclass =
	\{f_{\theta,\theta_0}(x) = \sign{x^\top \theta + \theta_0} \mid \theta
	\in \reals^d, \theta_0 \in \reals\}$ with the convention that
	$\sign{z} = +1$ for $z \geq 0$ and $\sign{z} = -1$ otherwise. If the
	recourse map is such that any point $x_0$ within distance $D > 0$ of
	the decision boundary of $f_{\theta,\theta_0}$ gets mapped to the
	positive class, then this class is invariant under recourse, because
	\[
	f_{\theta,\theta_0'}(\varphi(f_{\theta,\theta_0'},x_0))
	= f_{\theta,\theta_0}(x_0)
	\qquad
	\text{for all $x_0 \in \domain$}
	\]
	when $\theta_0' = \theta_0 - D \|\theta\|$. To see this, note that
	the (signed) distance from $x_0$ to the decision boundary for
	$f_{\theta,\theta_0'}$ is $\frac{-x_0^\top
		\theta-\theta_0'}{\|\theta\|}$. Hence the following are all
	equivalent:
	\begin{align*}
		f_{\theta,\theta_0'}(\varphi(f_{\theta,\theta_0'},x_0)) &= +1\\
		\frac{-x_0^\top\theta - \theta_0'}{\|\theta\|} &\leq D\\
		x_0^\top\theta + \theta_0' &\geq -D\|\theta\|\\
		x_0^\top\theta + \theta_0 &\geq 0\\
		f_{\theta,\theta_0}(x_0) &= +1.
	\end{align*}
\end{example}

One can extend this idea to other geometrical shapes: 

\begin{example}
Consider the spherical classifiers for which
	$f_{\theta,b}(x) = +1$ if and only if $\|x - \theta\| \geq b$. Then
	the set $\modelclass = \{f_{\theta,b} \mid \theta \in \reals^d, b \in
	\reals_+\}$ is invariant under recourse when the recourse map is again
	such that any point $x_0$ in the negative class that lies within
	distance $D > 0$ of the decision boundary of $f_{\theta,b}$ gets
	mapped to the positive class. This follows because providing recourse
	has the effect of effectively shrinking $b$ by $D$, so we can undo
	this effect by increasing $b$ to $b'= b+D$:
	\[
	f_{\theta,b'}(\varphi(f_{\theta,b'},x_0)) = f_{\theta,b}(x_0)
	\qquad
	\text{for all $x_0 \in \domain$.}
	\]
\end{example}


\subsection{Proof of Theorem~\ref{th:strategizing-defiant}}

\StrategizingDefiant*
\begin{proof}
Note that in the defiant case $Q(Y  \mid X_0) = P(Y  \mid X_0)$ as
\begin{align*}
    Q(Y  \mid X_0)
    &= \int\limits_{\mathcal{X}} Q(Y, X=\text{d}x  \mid X_0)\\
    &= \int\limits_{\mathcal{X}}
        Q(Y  \mid X=x, X_0 )Q(X=\text{d}x  \mid X_0)\\
    &= P(Y  \mid X_0) \int\limits_{\mathcal{X}} Q(X=\text{d}x \mid X_0)  \\
    &= P(Y  \mid X_0)
.\end{align*}
Using this we, we write
\begin{align*}
	\min_{f \in \modelclass} R_{Q_f}(f)
    &= \min_{f \in \modelclass}
        \E_{(X,Y) \sim Q_f}[\loss(f(X), Y)] \\
	&= \min_{f \in \modelclass}
        \E_{(X_0,Y) \sim Q_f}[\loss(f(\varphi_f(X_0)), Y)] \\
	&= \min_{f \in \modelclass}
    \E_{(X_0,Y) \sim P}[\loss(f(\varphi_f(X_0)), Y)] \tag{by defiant case}\\
	&= \min_{f \in \modelclass^{r}_\varphi}
        \E_{(X_0,Y) \sim P}[\loss(f(X_0), Y)] \tag{by definition of $\modelclass^{r}_\varphi$} \\
	&= \min_{f \in \modelclass}
        \E_{(X_0,Y) \sim P}[\loss(f(X_0), Y)] \tag{since $\modelclass=\modelclass^{r}_\varphi$} \\
        \min_{f \in \modelclass} R_{Q_f}(f)	&= \min_{f \in \modelclass} R_P(f)
\, .
\end{align*}
\end{proof}
%

\subsection{Explicit $\Delta$ bound}

\begin{example}
	Let us further specialize the setting of
	Example~\ref{ex:invariance_linear_classifiers} to the task of
	distinguishing between two Gaussians with different means
	$\mu,\nu \in \reals^d$ and common positive definite
	covariance matrix~$\Sigma$. That is, let $P(Y=-1) = P(Y=+1) = 1/2$, 
    and the data is distributed according to 
	$P(X_0|Y=+1) = \normal(\mu,\Sigma)$ and $P(X_0|Y=-1) = \normal(\nu,\Sigma)$. 
	Then $ f = f_{\theta,\theta_0}$
	for $\theta = \Sigma^{-1}(\mu- \nu)$ and $\theta_0 =
	-\tfrac{1}{2}(\mu + \nu)^\top \Sigma^{-1}(\mu -
    \nu)$ is the Bayes optimal classifier. See Section~\ref{sec:additional_gaus_details}
	for a justification. The compensating classifier is given by $ f' =
	f_{\theta,\theta_0'}$ for $\theta_0' = \theta_0 - D\|\theta\|$. Then
	recourse for $ f'$ affects users in a band of width $D\|\theta\|$
	which lies just in the positive class according to $ f$:
	\[
	\mathcal{A} = \{ x : 0 \leq x^\top \theta + \theta_0 < D\|\theta\| \},
	\]
	and
	\begin{align*}
		\Delta
		&= \int_{\mathcal{A}} \Big\{P(Y=-1 \mid X_0=x_0)
        - P(Y=-1 \mid X_0=\varphi_{ f'}(x_0))\Big\} 
            P(\text{d}x_0)\\
		&= \int_{\mathcal{A}} \Big\{P(Y=-1 \mid X_0=x_0)
        - P\left(Y=-1 \mid X_0=x_0 - \frac{ f'(x_0)}{\|\theta\|^2} \theta\right) \Big\} 
            P(\text{d}x_0).
	\end{align*}
    Again by Section~\ref{sec:additional_gaus_details}, we can write the posterior distribution as
    \begin{align*}
        P(Y=-1 | X_0 =x) =  \frac{1}{1 + e^{\theta^{\top}x + \theta_0}}
    ,\end{align*}
    Now, we write
    \begin{align*}
        \theta^\top \varphi( f',x_0) + \theta_0
        &= \theta^\top x_0 + \theta_0
            - (\theta)^\top 
            \left(\frac{ f'(x_0)}{\norm{\theta}^2}\theta \right) \\
        &= \theta^\top x_0 + \theta_0 + 1 
    \, ,
    \end{align*}
    since $ f'(x_0)=-1$. 
    Thus, $\theta^\top \varphi_{ f'}(x_0) + \theta > \theta^\top x_0 + \theta_0$. 
    Since the mapping $t\mapsto 1/(1+\exp(t))$ is decreasing,
    we deduce that $\Delta >0$ in this case. 
\end{example}


\subsection{Proof of Theorem~\ref{th:strategizing-compliant}}

\StrategizingCompliant*
\begin{proof}
As in the defiant case, the proof considers the strategic
choice $ f'$, which compensates for the effect of recourse in order
to maintain the same decision boundary as in the case without recourse.
The effect of recourse is then only to change the distribution in a way
that is captured by Definition~\ref{def:compliant_move}.
We first notice that $\min_{f \in \modelclass} R_{Q_f}(f) \leq R_{Q_{ f'}}( f')$
holds because $ f' \in \modelclass$. 
We write
\begin{align*}
	R_{Q_{ f'}}( f')
	&= \E_{(X,Y) \sim Q_{ f'}}[\loss( f'(X), Y))] \\
	&= \E_{(X_0,Y) \sim Q_{ f'}}[\loss( f'(\varphi_{ f'}(X_0)), Y)] \\
	&= \E_{(X_0,Y) \sim Q_{ f'}}[\loss( f(X_0), Y)]\\
	&= \E_{(X_0,Y) \sim P}[\loss( f(X_0), Y)] - \Delta \tag{definition of $\Delta$} \\
R_{Q_{ f'}}( f')	&= \min_{f \in \modelclass} R_P(f) - \Delta
	\, . \end{align*} \end{proof}
%


\section{Details Experimental Setup}
\label{app:experiments}
All the code to reproduce the experiments and figures in this paper can be found 
in a GitHub repository.\footnote{\href{
    https://github.com/HiddeFok/consequences-of-recourse}{
github.com/HiddeFok/consequences-of-recourse
}}
All experiments were performed on a single 
$32$-core CPU node (AMD Rome 7H12) with $64$GB of RAM.  
For all experiments we 
used the following classifiers from the \verb|scikit-learn| (version $1.0$) library:
\begin{itemize}
    \item \verb|LogisticRegression|, with the default parameters,
        except for 
        \verb|class_weigt='balanced'|.
    \item \verb|GradientBoostingClassifier|, with the default parameters,
        except for
        \verb|n_estimators=10| .
    \item \verb|DecisionTreeClassifier|, with the default parameters,
        except for 
        \verb|class_weigt='balanced'| and \verb|max_depth=4|. 
    \item \verb|GaussianNB|, with the default parameters.
    \item \verb|RandomForestClassifier|, with the default parameters
        except for 
        \verb|class_weigt='balanced'|,
        \verb|max_depth=4| 
        and \verb|n_estimators=10|. 
   \item \verb|QuadraticDiscrimantAnalysis|, with the default parameters.
   \item \verb|MLPClassifier|, with the default parameters
        and hidden layers, \\
        \verb|hidden_layer_sizes=(4,)|. 
   \item \verb|MLPClassifier|, with the default parameters
        and hidden layers, \\
        \verb|hidden_layer_sizes=(4, 4)|. 
   \item \verb|MLPClassifier|, with the default parameters
        and hidden layers, \\
        \verb|hidden_layer_sizes=(8,)|. 
   \item \verb|MLPClassifier|, with the default parameters
        and hidden layers, \\
        \verb|hidden_layer_sizes=(8, 16)|. 
   \item \verb|MLPClassifier|, with the default parameters
        and hidden layers,\\
        \verb|hidden_layer_sizes=(8, 16, 8)|. 
\end{itemize}
For both the synthetic data experiments and real data experiments, 
we repeated the experiments $10$ times to estimate confidence bounds. 
The result with the lower risk is written in bold in the result tables. 
If the confidence bounds overlap we make both results bold.

\subsection{Synthetic Data}
For each of the synthetic data experiments, we generated $5000$ training samples and
$1000$ test samples with balanced classes, i.e.
$P(Y=+1) = P(Y=-1)=\tfrac{1}{2}$. Examples of the results of these experiments 
can be found in Figures~\ref{fig:moons_examples}, \ref{fig:circles_examples}, and
\ref{fig:gaussians_examples}.

\paragraph{Moons dataset}
The Moons dataset is acquired through the \verb|make_moons| function from
\verb|scikit-learn|.  A data point $X_0$ is sampled by first discretizing  
$[0, \pi)$ uniformly and drawing one of these points $U$ uniformly, without replacement. Then,
sample $\epsilon \sim \normal(0, \sigma ^2 I_2)$ and construct $X_0$ by setting
\begin{align*}
    X_0  
    &\coloneqq s(U) + \epsilon 
    = \begin{bmatrix} \cos(U) \\ \sin(U) \end{bmatrix} 
        + \epsilon 
      & &\text{for } Y=+1\\
    X_0 
      &\coloneqq c(U) + \epsilon
      = \begin{bmatrix} 1 - \cos(U) \\ 1 - \sin(U)  \end{bmatrix} 
        - \begin{bmatrix} 0 \\ \tfrac{1}{2} \end{bmatrix} 
        + \epsilon
      & &\text{for } Y=-1
.\end{align*}
For our examples we selected $\sigma = 0.2$. 

To re-sample the label $Y$ after providing
recourse we calculate the conditional distribution of this model. Let, $p$ be the 
density of $\epsilon$. Then, the density
of $X_0  \mid  Y = +1$ and $X_0 \mid  Y= -1$, denoted by $g_+$ and $g_{-}$ respectively, 
is given by
\begin{align*}
    g_{+}(x_0) 
        &= \frac{1}{\pi} \int_{0}^{\pi} p(x_0 - s(u))  \text{d}u,\\
    g_{-}(x_0) 
        &= \frac{1}{\pi} \int_{0}^{\pi} p(x_0 - c(u))  \text{d}u
.\end{align*}
In our implementation this integral is approximated by a Riemann sum. 
The conditional distribution now follows,
\begin{align*}
    P(Y=1  \mid X_0 = x_0) 
    &= \frac{g_{+}(x_0)P(Y=1) }{g_{+}(x_0)P(Y=1) + g_{-}(x_0) P(Y=-1) } \\
    &= \frac{g_{+}(x_0) }{g_{+}(x_0) + g_{-}(x_0) }
.\end{align*}
\paragraph{Circles dataset}
The Circles dataset is acquired through the \verb|make_circles| function from
\verb|scikit-learn|. A data point $X$ is sampled by first discretizing  
$[0, 2\pi)$ uniformly and drawing one of these points $U$ uniformly, without replacement. 
Then, sample $\epsilon \sim \normal(0, \sigma ^2 I_2)$, set $\lambda \in (0, 1)$
and construct $X$ by setting
\begin{align*}
    X_0  
    &\coloneqq \lambda s(U) + \epsilon 
    = \lambda \begin{bmatrix} \cos(U) \\ \sin(U) \end{bmatrix} 
        + \epsilon 
      & &\text{for } Y=+1\\
    X_0 
      &\coloneqq s(U) + \epsilon
      = \begin{bmatrix} \cos(U) \\ \sin(U) \end{bmatrix} 
        + \epsilon
      & &\text{for } Y=-1
.\end{align*}
For our examples we selected $\sigma = 0.2$ and $\lambda=0.6$. 

To re-sample the label $Y$ after providing
recourse we calculate the conditional distribution of this model. Let, $p$ be the 
density of $\epsilon$. Then, the density
of $X_0  \mid  Y = 1$ and $X_0  \mid  Y= -1$, denoted by $g_+$ and $g_{-}$ respectively, 
is given by
\begin{align*}
    g_{+}(x_0) 
        &= \frac{1}{2\pi} \int_{0}^{2\pi} p(x_0 - \lambda s(u))  \text{d}u,\\
    g_{-}(x_0) 
        &= \frac{1}{2\pi} \int_{0}^{2\pi} p(x_0 - s(u))  \text{d}u
.\end{align*}
In our implementation this integral is approximated by a Riemann sum. 
The conditional distribution now follows,
\begin{align*}
    P(Y=1  \mid X_0 = x_0) 
    &= \frac{g_{+}(x_0)P(Y=1) }{g_{+}(x_0)P(Y=1) + g_{-}(x_0) P(Y=-1) } \\
    &= \frac{g_{+}(x_0) }{g_{+}(x_0) + g_{-}(x_0) }
.\end{align*}
\begin{table}[b]
    \caption{Estimated risks on the Credit dataset.
    Lower risk is bold.}
    \label{tbl:real_data_credit}
    \centering 
    \footnotesize
    \setlength\tabcolsep{4pt}
    \begin{tabular}{m{1cm}|cccccc}
        \toprule
        &
        \multicolumn{2}{c}{Wachter} & 
        \multicolumn{2}{c}{GS} & 
        \multicolumn{2}{c}{CoGS}\\
        \hline
        & 
        $R_P$ & $R_Q$ & $R_P$ & $R_Q$ & $R_P$ & $R_Q$\\
        \toprule
        \bottomrule
        LR & 0.17 $\pm$ 0.04& \textbf{0.05} $\pm$ \textbf{0.02}
        & 0.17 $\pm$ 0.04& \textbf{0.05} $\pm$ \textbf{0.01}
        & 0.17 $\pm$ 0.03& \textbf{0.05} $\pm$ \textbf{0.01}
        \\

        GBT & \textbf{0.07} $\pm$ \textbf{0.00}& \textbf{0.07} $\pm$ \textbf{0.00}
        & \textbf{0.07} $\pm$ \textbf{0.00}& \textbf{0.07} $\pm$ \textbf{0.00}
        & \textbf{0.07} $\pm$ \textbf{0.00}& \textbf{0.07} $\pm$ \textbf{0.00}
        \\

        DT & \textbf{0.27} $\pm$ \textbf{0.09}& \textbf{0.12} $\pm$ \textbf{0.18}
        & 0.23 $\pm$ 0.13& \textbf{0.05} $\pm$ \textbf{0.01}
        & 0.23 $\pm$ 0.11& \textbf{0.06} $\pm$ \textbf{0.01}
        \\

        NB & \textbf{0.18} $\pm$ \textbf{0.27}& \textbf{0.06} $\pm$ \textbf{0.01}
        & \textbf{0.16} $\pm$ \textbf{0.17}& \textbf{0.06} $\pm$ \textbf{0.01}
        & \textbf{0.18} $\pm$ \textbf{0.37}& \textbf{0.06} $\pm$ \textbf{0.01}
        \\

        QDA & \textbf{0.21} $\pm$ \textbf{0.39}& \textbf{0.07} $\pm$ \textbf{0.01}
        & \textbf{0.14} $\pm$ \textbf{0.14}& \textbf{0.06} $\pm$ \textbf{0.01}
        & \textbf{0.13} $\pm$ \textbf{0.15}& \textbf{0.07} $\pm$ \textbf{0.02}
        \\

        NN 1 & \textbf{0.06} $\pm$ \textbf{0.01}& \textbf{0.06} $\pm$ \textbf{0.01}
        & \textbf{0.06} $\pm$ \textbf{0.01}& \textbf{0.06} $\pm$ \textbf{0.01}
        & \textbf{0.06} $\pm$ \textbf{0.01}& \textbf{0.06} $\pm$ \textbf{0.00}
        \\

        NN 2 & \textbf{0.07} $\pm$ \textbf{0.01}& \textbf{0.06} $\pm$ \textbf{0.01}
        & \textbf{0.07} $\pm$ \textbf{0.01}& \textbf{0.07} $\pm$ \textbf{0.01}
        & \textbf{0.07} $\pm$ \textbf{0.01}& \textbf{0.07} $\pm$ \textbf{0.01}
        \\

        NN 3 & \textbf{0.07} $\pm$ \textbf{0.01}& \textbf{0.07} $\pm$ \textbf{0.01}
        & \textbf{0.07} $\pm$ \textbf{0.00}& \textbf{0.07} $\pm$ \textbf{0.00}
        & \textbf{0.07} $\pm$ \textbf{0.01}& \textbf{0.07} $\pm$ \textbf{0.00}
        \\

        NN 4 & \textbf{0.06} $\pm$ \textbf{0.01}& \textbf{0.06} $\pm$ \textbf{0.01}
        & \textbf{0.06} $\pm$ \textbf{0.01}& \textbf{0.06} $\pm$ \textbf{0.01}
        & \textbf{0.06} $\pm$ \textbf{0.00}& \textbf{0.07} $\pm$ \textbf{0.00}
        \\

        NN 5 & \textbf{0.06} $\pm$ \textbf{0.01}& \textbf{0.06} $\pm$ \textbf{0.01}
        & \textbf{0.06} $\pm$ \textbf{0.01}& \textbf{0.06} $\pm$ \textbf{0.01}
        & \textbf{0.06} $\pm$ \textbf{0.00}& \textbf{0.07} $\pm$ \textbf{0.00}
        \\
    \end{tabular}
\end{table}
\begin{table}[t]
    \caption{Estimated risks on the HELOC dataset.
    Lower risk is bold.}
    \label{tbl:real_data_heloc}
    \centering 
    \footnotesize
    \setlength\tabcolsep{4pt}
    \begin{tabular}{m{1cm}|cccccc}
        \toprule
        &
        \multicolumn{2}{c}{Wachter} & 
        \multicolumn{2}{c}{GS} & 
        \multicolumn{2}{c}{CoGS}\\
        \hline
        & 
        $R_P$ & $R_Q$ & $R_P$ & $R_Q$ & $R_P$ & $R_Q$\\
        \toprule
        \bottomrule
        LR & \textbf{0.27} $\pm$ \textbf{0.03}& 0.40 $\pm$ 0.04
        & \textbf{0.28} $\pm$ \textbf{0.02}& 0.42 $\pm$ 0.02
        & \textbf{0.27} $\pm$ \textbf{0.04}& 0.45 $\pm$ 0.02
        \\

        GBT & \textbf{0.19} $\pm$ \textbf{0.02}& \textbf{0.21} $\pm$ \textbf{0.04}
        & \textbf{0.20} $\pm$ \textbf{0.02}& \textbf{0.25} $\pm$ \textbf{0.12}
        & \textbf{0.20} $\pm$ \textbf{0.01}& 0.35 $\pm$ 0.05
        \\

        DT & \textbf{0.19} $\pm$ \textbf{0.01}& \textbf{0.29} $\pm$ \textbf{0.31}
        & \textbf{0.20} $\pm$ \textbf{0.02}& \textbf{0.25} $\pm$ \textbf{0.13}
        & \textbf{0.20} $\pm$ \textbf{0.02}& 0.35 $\pm$ 0.09
        \\

        NB & \textbf{0.29} $\pm$ \textbf{0.02}& 0.45 $\pm$ 0.03
        & \textbf{0.29} $\pm$ \textbf{0.02}& 0.45 $\pm$ 0.07
        & \textbf{0.28} $\pm$ \textbf{0.03}& 0.51 $\pm$ 0.05
        \\

        QDA & \textbf{0.32} $\pm$ \textbf{0.03}& 0.47 $\pm$ 0.03
        & \textbf{0.32} $\pm$ \textbf{0.02}& 0.49 $\pm$ 0.03
        & \textbf{0.31} $\pm$ \textbf{0.03}& 0.52 $\pm$ 0.03
        \\

        NN 1 & \textbf{0.27} $\pm$ \textbf{0.03}& 0.46 $\pm$ 0.03
        & \textbf{0.28} $\pm$ \textbf{0.02}& 0.46 $\pm$ 0.03
        & \textbf{0.28} $\pm$ \textbf{0.02}& 0.49 $\pm$ 0.02
        \\

        NN 2 & \textbf{0.30} $\pm$ \textbf{0.12}& 0.47 $\pm$ 0.02
        & \textbf{0.27} $\pm$ \textbf{0.03}& 0.45 $\pm$ 0.04
        & \textbf{0.30} $\pm$ \textbf{0.12}& 0.51 $\pm$ 0.07
        \\

        NN 3 & \textbf{0.28} $\pm$ \textbf{0.03}& 0.46 $\pm$ 0.04
        & \textbf{0.27} $\pm$ \textbf{0.02}& 0.46 $\pm$ 0.02
        & \textbf{0.27} $\pm$ \textbf{0.03}& 0.50 $\pm$ 0.02
        \\

        NN 4 & \textbf{0.27} $\pm$ \textbf{0.02}& 0.44 $\pm$ 0.05
        & \textbf{0.27} $\pm$ \textbf{0.04}& 0.45 $\pm$ 0.03
        & \textbf{0.26} $\pm$ \textbf{0.02}& 0.48 $\pm$ 0.04
        \\

        NN 5 & \textbf{0.27} $\pm$ \textbf{0.02}& 0.44 $\pm$ 0.05
        & \textbf{0.27} $\pm$ \textbf{0.04}& 0.45 $\pm$ 0.03
        & \textbf{0.26} $\pm$ \textbf{0.02}& 0.48 $\pm$ 0.04
        \\
    \end{tabular}
\end{table}

\paragraph{Gaussians dataset}
The Gaussian data points are sampled from $2$ Gaussians with different means 
$\mu, \nu \in \mathbb{R}^{2}$ and different
covariances $\Sigma_{+}, \Sigma_{-} \in \mathbb{R}^{2 \times 2}$, %
\begin{align*}
    X  \mid Y &= +1 \sim \normal(\mu, \Sigma_{+})\\
    X  \mid Y &= -1 \sim \normal(\nu, \Sigma_{+})
.\end{align*}
The densities of the $2$ conditional distributions are given by
\begin{align*}
    g_+(x_0)
    &= \frac{1}{2 \pi |\Sigma_+|}
    e^{-\frac{1}{2}(x_0 - \mu)^{\top}\Sigma^{-1}_+(x_0 - \mu)} \\   
    g_0(x_0)
    &= \frac{1}{2 \pi |\Sigma_-|}
    e^{-\frac{1}{2}(x_0 - \nu)^{\top}\Sigma^{-1}_-(x_0 - \nu)}
.\end{align*}

The conditional distribution is given by the odds between
the densities of the $2$ gaussians
\begin{align*}
    P(Y=1 | X_0=x_0) = \frac{g_{+}(x_0)}{g_{-}(x_0) + g_{+}(x_0)}
.\end{align*}

\subsubsection{Linear relation between $p$ and $R_Q$}
\label{app:linear_relation}

Here, we derive the linear relation seen in
Figure~\ref{fig:risk_p_sigma_increase}. Denote by $R_Q(p)$ the risk after
recourse dependent on $p$, then the following relation can be calculated
\begin{align}\label{eq:linear_relation}
    R_Q(p) = R_P(f) + p(R_Q(1)- R_P(f))
.\end{align}
We can apply Lemma~\ref{lem:express_Q_risk} and use that $r(x_0) = p$ 
for all $x_0$ to get the following expressions:
\begin{align*}
    R_Q(p) 
    &= \mathbb{E}\left[r(X_0)p_{-}(\varphi(X_0))\right] 
    + 
    \mathbb{E}\left[ (1 - r(X_0)) 1 \{f(X_0) \neq Y\}   \right] \\
    &= p\mathbb{E}\left[ p_{-}(\varphi(X_0))\right] 
    + 
    (1 - p) \mathbb{E}\left[1 \{f(X_0) \neq Y\}   \right] \\
    &= p\mathbb{E}\left[ p_{-}(\varphi(X_0))\right]
    + 
    (1 - p) R_P
.\end{align*}
This also gives us that $R_Q(1) = \mathbb{E}\left[ p_{-}(\varphi(X_0))\right]$. 
Substituting and rewriting gives 
\begin{align*}
    R_Q(p)
    &= p R_Q(1) + (1 - p) R_P \\
    &= R_P + p(R_Q(1) - R_P) 
.\end{align*}
\subsection{Real Data}
Here, we describe how the experiments for the real data were performed.
\paragraph{Conditional distribution estimation}
As mentioned before the main challenge with real data is that we do not have access
to $P(Y  \mid X_0 )$. To circumvent this, we estimate this function as well as possible, 
by reserving most of the data to train a calibrated classifier. $N_{\text{cond train}}$ are
used to train this classifier and $N_{\text{cond calib}}$ are used to calibrate
this classifier. The exact values of the data splits
are given in Table~\ref{tbl:real_data_params}. Furthermore, 
we perform a grid search over a large set of parameters using cross validation
to find the best performing calibrated classifier. The parameters in the 
grid search are 
\begin{itemize}
    \item \verb|learning_rate|: $\{0.05, 0.15\} $,
    \item \verb|n_estimators|: $\{10, 20, 60\} $,
    \item \verb|subsample|: $\{0.8, 0.9, 1\} $,
    \item \verb|max_depth|: $\{1, 2, 3\} $.
\end{itemize}
As a base classifier
we use the \texttt{Gradient} \texttt{BoostedClassifier} from \verb|scikit-learn| and
we use Platt scaling \citep{platt1999probabilistic} to calibrate 
the probabilities.

\paragraph{Classification and Recourse} After a conditional distribution is
estimated for each dataset, we train the same set of classifiers as for the synthetic data 
on $N_{\text{train}}$ different
data points. Then, counterfactuals are generated using the different methods and
using the trained conditional estimated distribution a new class label is sampled 
for the position at the counterfactual point. The estimated risk is then calculated
for the dataset before and after recourse is provided.

\begin{table}[b]
    \caption{Details of the datasets used during the experiments}
    \label{tbl:real_data_params}
    \centering 
    \setlength\tabcolsep{4pt}
    \begin{tabular}{m{2cm}ccc}
        \toprule
        & 
        Credit data & 
        Adult data & 
        HELOC data \\
        \toprule
        $P(Y=+1)$ & 0.932 & 0.239 & 0.480 \\
        $P(Y=-1)$ & 0.068 & 0.761 & 0.520 \\
        $N_{\text{cond train}}$ & 40000 & 30000 & 5000 \\
        $N_{\text{cond calib}}$ & 10000 & 10000 & 2000 \\
        $N_{\text{train}}$ & 5000 & 5000 & 5000 \\
        $N_{\text{test}}$ & 1000 & 1000 & 1000 \\
        \bottomrule
    \end{tabular}
\end{table}

\newpage
\begin{figure}[h]
    \centering
    \includegraphics[scale=0.8]{./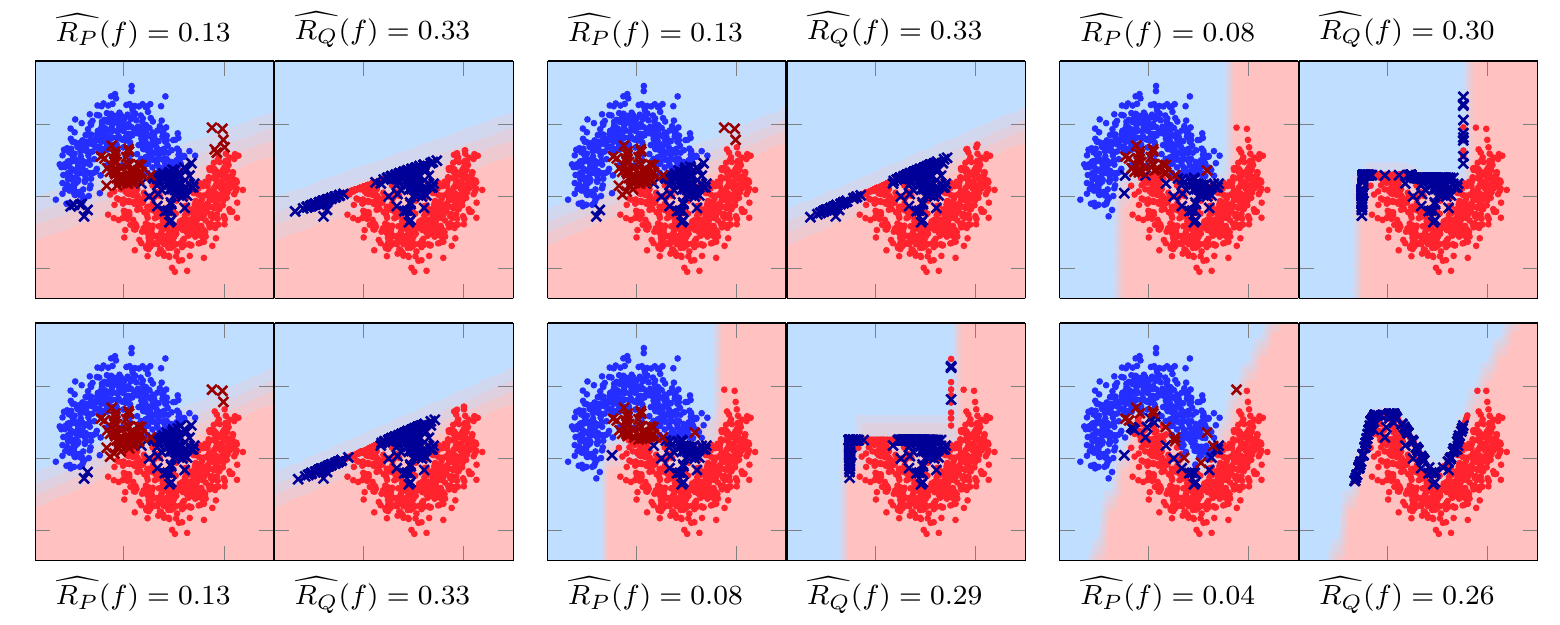}
    \caption{Examples of the effect of giving recourse with various classifiers on
    the Moons data set. From left to Right, Top to Bottom: 
    LR, QDA, GBT, NB, DT, NN(8, 16).}
    \label{fig:moons_examples}
\end{figure}

\begin{figure}[h]
    \centering
    \includegraphics[scale=0.8]{./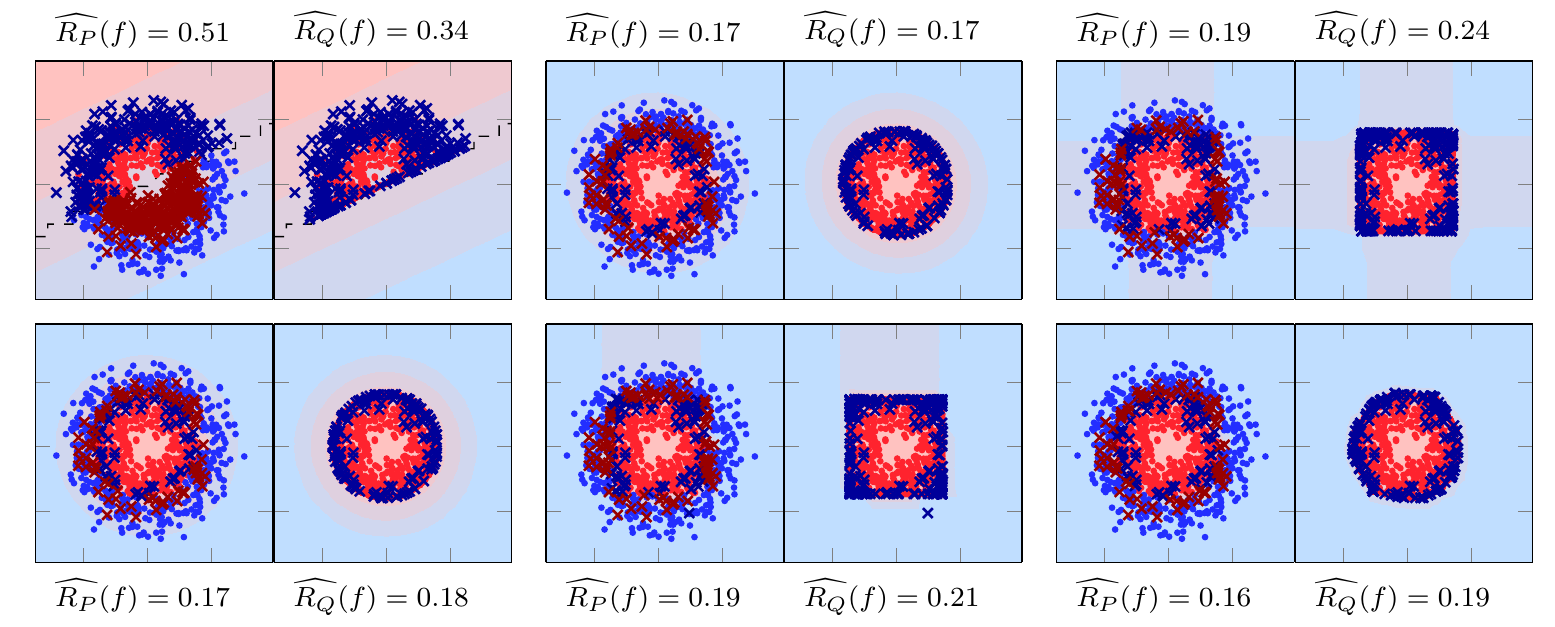}
    \caption{Examples of the effect of giving recourse with various classifiers on
    the Circles data set. From left to Right, Top to Bottom: 
    LR, QDA, GBT, NB, DT, NN(8, 16).}
    \label{fig:circles_examples}
\end{figure}

\begin{figure}[h]
    \centering
    \includegraphics[scale=0.8]{./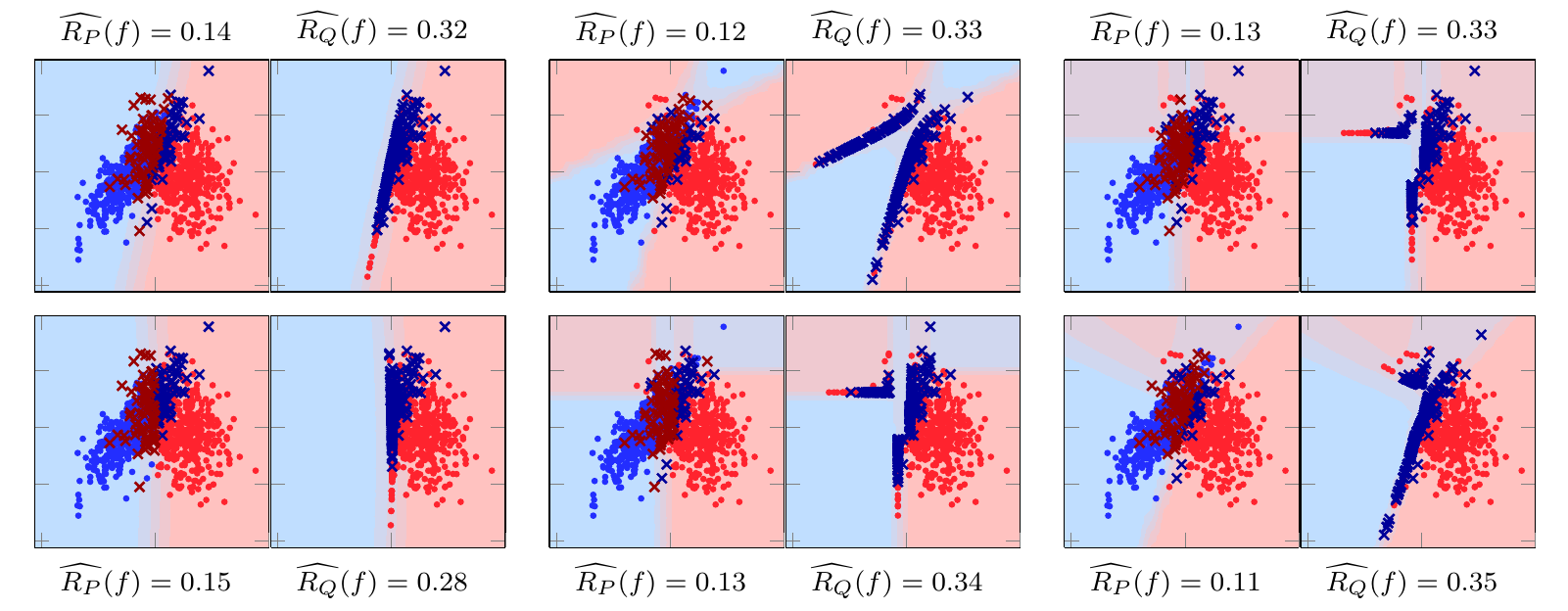}
    \caption{Examples of the effect of giving recourse with various classifiers on
    the Gaussians data set. From left to Right, Top to Bottom: 
    LR, QDA, GBT, NB, DT, NN(8, 16).}
    \label{fig:gaussians_examples}
\end{figure}

\end{document}